\newcommand{\dcsgdp}{\texttt{DC-SGD-P}\xspace}
\newcommand{\dcsgde}{\texttt{DC-SGD-E}\xspace}
\newcommand{\dcsgd}{\texttt{DC-SGD}\xspace}
\newcommand{\dpsgd}{{DP-SGD}\xspace}
\renewcommand{\smallskip}{\vspace{2pt plus 0.5pt minus 0.5pt}}
\newtheorem{definition}{\bf Definition}
\newtheorem{theorem}{\bf Theorem}
\newtheorem{lemma}{\bf Lemma}
\newcommand{\bnm}{\begin{newmath}}
\newcommand{\enm}{\end{newmath}}
\newcommand{\bea}{\begin{eqnarray*}}%
\newcommand{\eea}{\end{eqnarray*}}%
\newcommand{\bne}{\begin{newequation}}
\newcommand{\ene}{\end{newequation}}
\newcommand{\bal}{\begin{newalign}}
\newcommand{\eal}{\end{newalign}}
\newenvironment{newalign}{\begin{align}%
\setlength{\abovedisplayskip}{4pt}%
\setlength{\belowdisplayskip}{4pt}%
\setlength{\abovedisplayshortskip}{6pt}%
\setlength{\belowdisplayshortskip}{6pt} }{\end{align}}
\newenvironment{newmath}{\begin{displaymath}%
\setlength{\abovedisplayskip}{4pt}%
\setlength{\belowdisplayskip}{4pt}%
\setlength{\abovedisplayshortskip}{6pt}%
\setlength{\belowdisplayshortskip}{6pt} }{\end{displaymath}}
\newenvironment{newequation}{\begin{equation}%
\setlength{\abovedisplayskip}{4pt}%
\setlength{\belowdisplayskip}{4pt}%
\setlength{\abovedisplayshortskip}{6pt}%
\setlength{\belowdisplayshortskip}{6pt} }{\end{equation}}
\newcounter{ctr}
\newcounter{mytable}
\def\mytable{\begin{centering}\refstepcounter{mytable}}
\def\endmytable{\end{centering}}
\newcounter{myfig}
\def\myfig{\begin{centering}\refstepcounter{myfig}}
\def\endmyfig{\end{centering}}
\newlength{\saveparindent}
\newlength{\saveparskip}
\newcommand{\E}{{\rm I\kern-.3em E}}
\newcommand{\assref}[1]{\mbox{Assumption~\ref{#1}}}
\newcommand{\secref}[1]{\mbox{Section~\ref{#1}}}
\newcommand{\appref}[1]{\mbox{Appendix~\ref{#1}}}
\renewcommand{\eqref}[1]{\mbox{Equation~(\ref{#1})}}
\newcommand{\get}{{\:{\leftarrow}\:}}
\def \part {part}
\renewcommand{\paragraph}[1]{\vspace*{6pt}\noindent\textbf{#1}\;}
\newtheorem{assumption}{Assumption}
\def \blackslug{\hbox{\hskip 1pt \vrule width 4pt height 8pt
    depth 1.5pt \hskip 1pt}}
\def \qed{\quad\blackslug\lower 8.5pt\null\par}
\newcounter{mynote}[section]
\newcommand\ignore[1]{}
\newcounter{rcnote}[section]
\newcounter{mrnote}[section]
\newcounter{fknote}[section]
\newcounter{anote}[section]
\DeclareMathSymbol{\mlq}{\mathord}{operators}{``}
\DeclareMathSymbol{\mrq}{\mathord}{operators}{`'}
\newcommand{\rhf}[2]{R_{f, \gamma}}
\DeclareDocumentCommand{\edist}{o o}{
  \ensuremath{
    \IfNoValueTF{#1}{{d}}{{\sf d}(#1,#2)}
  }
}
\newcommand{\olrk}[1]{\ifx\nursymbol#1\else\!\!\mskip4.5mu plus 0.5mu\left(\mskip0.5mu plus0.5mu #1\mskip1.5mu plus0.5mu \right)\fi}
\def \q{q}
\NewDocumentCommand{\indseq}{ O{1} O{r} }{{#1}\ldots {#2}}
\begin{document}

\title{DC-SGD: Differentially Private SGD with Dynamic Clipping through Gradient Norm Distribution Estimation}

\author{{Chengkun Wei, Weixian Li, Chen Gong, and Wenzhi Chen}
\thanks{Chengkun Wei and Wenzhi Chen are with the College of Computer Science and Technology, Zhejiang University, Hangzhou 310027, China (e-mail: \{weichengkun, chenwz\}@zju.edu.cn).}
\thanks{Weixian Li is with the Ant Group (e-mail: liweixian.lwx@antgroup.com)}
\thanks{Chen Gong is with the School of Engineering, University of Virginia (e-mail:fzv6en@virginia.edu).}
}

\maketitle

\begin{abstract}
Differentially Private Stochastic Gradient Descent (DP-SGD) is a widely adopted technique for privacy-preserving deep learning. A critical challenge in DP-SGD is selecting the optimal clipping threshold $C$, which involves balancing the trade-off between clipping bias and noise magnitude, incurring substantial privacy and computing overhead during hyperparameter tuning.

In this paper, we propose Dynamic Clipping DP-SGD (DC-SGD), a framework that leverages differentially private histograms to estimate gradient norm distributions and dynamically adjust the clipping threshold $C$.
Our framework includes two novel mechanisms: DC-SGD-P and DC-SGD-E. DC-SGD-P adjusts the clipping threshold based on a percentile of gradient norms, while DC-SGD-E minimizes the expected squared error of gradients to optimize 
$C$. These dynamic adjustments significantly reduce the burden of hyperparameter tuning $C$.
The extensive experiments on various deep learning tasks, including image classification and natural language processing, show that our proposed dynamic algorithms achieve up to 9 times acceleration on hyperparameter tuning than \dpsgd.
And DC-SGD-E can achieve an accuracy improvement of $10.62\%$ on CIFAR10 than \dpsgd under the same privacy budget of hyperparameter tuning.
We conduct rigorous theoretical privacy and convergence analyses, showing that our methods seamlessly integrate with the Adam optimizer.
Our results highlight the robust performance and efficiency of DC-SGD, offering a practical solution for differentially private deep learning with reduced computational overhead and enhanced privacy guarantees.

\end{abstract}

\maketitle

\section{Introduction}
\label{sec:intro}

\IEEEPARstart{D}{eep} learning (DL) has made significant strides in various fields, such as computer vision~\cite{Zhao_2023_ICCV,cv1,cv2,cv3,cv4} and natural language processing~\cite{otter2020survey,nlp1,nlp2,nlp3}.
However, training deep learning models using sensitive data, like images or DNA sequences, can pose serious privacy challenges, potentially exposing sensitive details~\cite{shokri2017membership,choquette2021label,hayes2017logan,fredrikson2015model,gong2022curiosity,li2023meticulously,gong2020identification,pang2023white}. Differentially Private Stochastic Gradient Descent (\dpsgd)~\cite{abadi2016deep}, which clips gradients and add noise to the clipped gradients to satisfy Differentially Privacy (DP), has become one of the most promising solutions for deep learning privacy.

The core principle of DP-SGD is to clip gradients and add noise to ensure DP. 
However, selecting an appropriate hyper-parameters clipping threshold $C$ remains a knotty issue \cite{andrew2021differentially,de2022unlocking,bu2022automatic,golatkar2022mixed}. 
This difficulty arises from the fact that the optimal threshold varies across different scenarios ~\cite{du2021dynamic,chen2020understanding}.
Specifically, the optimal value for $C$ can change throughout the training process. Initially, a higher $C$ might be beneficial to limit noise, but as training progresses and gradients typically become smaller, the same $C$ could result in minimal clipping and an overestimation of noise impact. 
This trade-off between clipping and noise is critical: a higher $C$ reduces clipping but increases noise, potentially degrading model accuracy, while a lower $C$ reduces noise but increases bias, leading to information loss and reduced performance.

Moreover, \textbf{tuning the clipping threshold involves substantial computational overhead and can result in additional privacy leakage~\cite{liu2019private,papernot2021hyperparameter}.} The hyperparameter tuning costs are significant. This process often requires multiple training runs with different $C$ values to identify the optimal setting, each run being time-consuming and costly in terms of privacy. This highlights the importance of developing methods that can dynamically adjust $C$ without extensive manual tuning.

\noindent \textbf{Existing Solutions.} 
Prior  research~\cite{abadi2016deep, pichapati2019adaclip,du2021dynamic, dong2019gaussian} has investigated the use of dynamic clipping thresholds to mitigate noise addition.
For example, Abadi et al.~\cite{abadi2016deep} recommend setting the clipping threshold, $C$, as the median of gradient norms, and Du et al.~\cite{du2021dynamic} advocate for adjusting the threshold according to the training iteration. However, these methods either lack definitive privacy guarantees or require careful initial settings for the clipping threshold and its update rate~\cite{abadi2016deep,du2021dynamic}.
Another promising research direction involves dynamically setting the clipping threshold $C$ based on the distribution of gradient norms. Andrew et al.~\cite{andrew2021differentially} propose determining $C$ using a hyperparameter $p$, representing a percentile of gradient norm distribution in federated learning with DP-FedAvg. While this approach eliminates the need for manual tuning of $C$, it introduces other hyperparameters, $p$ and $\eta_C$, thus still incurring more hyperparameter tuning costs.

In this paper, we aim to solve the problem:
\textit{
How can we update the clipping threshold through private gradient norm distribution estimation, eliminating the need for hyperparameter tuning of $C$?}

\noindent \textbf{Our method.} We present \textbf{D}ynamic \textbf{C}lipping DP-\textbf{SGD} (\dcsgd), a framework that dynamically updates the clipping threshold in differentially private SGD. Distinguishing from previous research, \dcsgd investigates the intricate relationship between the distribution of gradient norms and the clipping threshold, initially accessing the gradient norm distribution and then adaptively setting the clipping threshold. To ensure privacy, we use a histogram to estimate the distribution of gradient norms in a differentially private manner. Within this framework, \dcsgd is implemented through two mechanisms, \dcsgdp and \dcsgde, which are elaborated as follows:

We first propose the mechanism \dcsgd with Percentile (\dcsgdp), which is similar to Andrew et al.~\cite{andrew2021differentially}. However, it has fewer parameters that need to be tuned.
We adaptively set the clipping threshold during training to ensure that $p\%$ of gradients remain unclipped. In this way, we can enhance the stability of clipping by maintaining a consistent probability that each gradient is clipped throughout the training process.
We shift the tuning from $C$ to $p$, as our experimental findings suggest that the selection of P adheres to discernible patterns. What's more, we eliminate the cost of tuning $\eta_C$ by adaptively setting $C$ with the histogram.

The second mechanism is \dcsgd with Expected Squared Error (\dcsgde). We explore a noisy histogram to estimate this error under differential privacy, and dynamically construct a candidate set for the optimal clipping threshold. We dynamically set the clipping threshold by optimizing the expected squared error between the noisy and original gradients in SGD. By minimizing the expected squared error, we can reduce the impact of the privacy operation on individual gradients. Notably, \textit{\dcsgde requires no additional hyperparameters to be tuned, thus eliminating the need for tuning $C$}.

We highlight the robust transferability of \dcsgd, as our methods are compatible with the Adam optimizer~\cite{kingma2014adam}. The Adam optimizer is particularly advantageous because it minimizes the need for tuning the learning rate ($\eta$). By integrating our methods with the Adam optimizer, we significantly reduce the complexity of simultaneously adjusting both the clipping threshold and the learning rate $\eta$. This integration enhances the overall efficiency and usability of our approach.

\smallskip
\noindent \textbf{Evaluations.} We perform a rigorous privacy analysis of \dcsgdp and \dcsgde by converting it into equivalent \dpsgd, and provide rigorous convergence analysis.
We also conduct comprehensive experiments to evaluate the effectiveness of \dcsgd across four datasets: MNIST~\cite{lecun-mnisthandwrittendigit-2010}, CIFAR10~\cite{krizhevsky2009learning}, SVHN~\cite{netzer2011reading}, and QNLI~\cite{wang2018glue} on common models: small convolutional neural network,  large backbone networks ResNet~\cite{he2016deep}, and the pre-trained BERT-base model~\cite{devlin2018bert}.
Experimental results illustrate that our methods can achieve higher accuracy on most models and datasets compared to \dpsgd while saving the effort of tuning $C$. 

For time cost, \dcsgde can achieve a $9\times$ speedup on hyperparameter tuning. For model performance, we consider the privacy cost of tuning and use RDP \cite{mironov2017renyi} to account for the privacy cost. Compared to other baselines\cite{abadi2016deep,du2021dynamic,andrew2021differentially}, \dcsgde achieve an improvement of $10.62\%$ on CIFAR10 using ResNet34 with $\epsilon=2$, while \dcsgdp achieve a $2.13\%$ improvement for SVHN on ResNet34 with $\epsilon=2$. This improvement becomes more pronounced compared to other methods requiring additional hyperparameter tuning.
And if we ignore the privacy cost of tuning like previous work~\cite{andrew2021differentially,du2021dynamic,yu2019differentially,tramer2020differentially}, and compare the performance of different methods \cite{abadi2016deep,du2021dynamic,andrew2021differentially} with selectively chosen hyperparameter combination, \dcsgdp and \dcsgde can still achieve a comparable or better performance compared to other baselines. 

To sum up, we make the following contributions:
\begin{itemize}
\item {We propose \dcsgd that determines a dynamic clipping threshold $C$ by examining the gradient norm distribution. We instantiate it to two mechanisms and tackle the intricate issue of adjusting the clipping threshold.} 
\item We conduct rigorous theoretical privacy analysis of \dcsgd by converting it to equivalent \dpsgd and perform a comprehensive theoretical analysis to ascertain the convergence properties of \dcsgd. The theoretical analysis illustrates the privacy and utility of our methods.
\item {Comprehensive experiments across image classification and natural language processing tasks present that both \dcsgdp and \dcsgde outperform or match the performance of the vanilla \dpsgd while reducing the time and privacy cost of extensive hyperparameter tuning.} 

\end{itemize}

\section{Background and Challenges}

\label{sec:background}
\subsection{Differential Privacy}

\begin{definition}[$ (\epsilon,\delta)$-Differential Privacy~\cite{dwork2006calibrating}]\label{dfe1}
A mechanism or algorithm $M:\mathcal{D} \to \mathbb{R}^d$ satisfies $(\epsilon,\delta)$-differential privacy if, for any pair of neighboring datasets $D, D' \in \mathcal{D}$, and any subset $S \subseteq \mathbb{R}^d$, it holds that:
\begin{equation*}
Pr[M (D) \in S] \leq e^\epsilon Pr[M (D')\in S]+\delta
\end{equation*}
\noindent where $\epsilon$ is the privacy budget, measuring the similarity of outputs on neighboring datasets, with a smaller $\epsilon$ meaning better privacy protection. $\delta$ denotes the probability that the privacy guarantee fails. 
\end{definition}

The concept of neighboring datasets varies by context, affecting the degree of DP. In example-level DP, two datasets are considered neighboring if they differ by a single example. In user-level DP, a single user may contribute multiple examples, resulting in neighboring datasets differing by several examples~\cite{mcmahan2017learning,andrew2021differentially}. 
\textit{This paper focuses on example-level DP to ensure the privacy of individual examples, and the DP sensitivity is defined as follows:}

\begin{definition}[Sensitivity of a function $f$\cite{dwork2006calibrating}]\label{def2}
Given two neighboring datasets $D,D' \in \mathcal{D}$, and a function $f:\mathcal{D}\to\ \mathbb{R}^d$. The $\mathcal{L}_m$-sensitivity of function $f$ is defined as the maximum value by which $f$ changes with a single individual,
\begin{equation*}
% \small
\Delta_{nf}=\max\limits_{D,D'}{{||f (D)-f (D')||}_m}
\end{equation*}
where ${||\cdot||}_m$ denotes the $\mathcal{L}_m$ norm. The sensitivity measures the maximum change in the query output caused by altering a single example.
\end{definition}

We leverage R\'{e}nyi DP (RDP)~\cite{mironov2017renyi} to get a tighter privacy analysis under composition~\cite{andrew2021differentially}.  
RDP is fundamentally grounded on the concept of R\'{e}nyi divergence. Given two probability distributions $P$ and $Q$ defined over $\mathbb{R}$, the Rényi divergence of order $\alpha > 1 $ is given by
\[D_\alpha (P\|Q) = \frac{1}{\alpha-1} \log \mathbb{E}_{x \sim Q} \left( \frac{P(x)}{Q(x)} \right)^\alpha,\]
where $P(x)$ denotes the density of $P$ at point $x$. In this expression, the logarithm is the natural logarithm, and the notation $x \sim Q$ signifies that $x$ is drawn from the distribution $Q$. Then, RDP is defined as follows,

\begin{definition}[$\left (\alpha,\rho\right)$-RDP~\cite{mironov2017renyi}]\label{deff6}
A mechanism $ f: \mathcal{D} \rightarrow \mathbb{R}^d $ is said to possess $ \rho $-R\'{e}nyi differential privacy of order $ \alpha $, denoted as $ (\alpha, \rho) $-RDP, if for any neighboring datasets $ D, D' \in \mathcal{D} $, the following condition is met:$
D_\alpha (f(D) || f(D')) \leq \rho 
$.
\end{definition} 
Mironov et al.~\cite{mironov2017renyi} demonstrate that RDP provides a more stringent privacy analysis under composition than $(\epsilon,\delta)$-DP. Furthermore, RDP can be translated into $(\alpha,\epsilon)$-DP using the subsequent lemma.

\begin{lemma}[From $(\alpha,\rho)$-RDP to $(\epsilon,\delta)$-DP~\cite{balle2020hypothesis}]\label{convert}
If a mechanism $\mathcal{M}$ adheres to $(\alpha,\rho)$-RDP, then it also satisfies $(\epsilon,\delta)$-DP for any $0<\delta<1$ where
$\epsilon=\rho+\log(\frac{\alpha-1}{\alpha})-\frac{\log \delta+\log \alpha}{\alpha-1}$.
\end{lemma}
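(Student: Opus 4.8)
The plan is to establish the conversion through the hockey-stick characterization of $(\epsilon,\delta)$-DP, combined with a single pointwise convexity inequality. Fix a pair of neighboring datasets and write $P = \mathcal{M}(D)$ and $Q = \mathcal{M}(D')$ for the two output distributions, with likelihood ratio $Z = dP/dQ$ (the RDP hypothesis with finite $\rho$ forces $P \ll Q$, so $Z$ is well defined $Q$-almost everywhere). First I would recall that $\mathcal{M}$ is $(\epsilon,\delta)$-DP exactly when, for every such pair, $\sup_{S}\left(P(S) - e^{\epsilon} Q(S)\right) \le \delta$, and that this supremum is attained at the super-level set $S^\star = \{z : Z(z) > e^{\epsilon}\}$, giving the clean expression
\begin{equation*}
\sup_{S}\left(P(S) - e^{\epsilon} Q(S)\right) = \mathbb{E}_{Q}\!\left[(Z - e^{\epsilon})_+\right].
\end{equation*}
In parallel, I would unpack the hypothesis: $(\alpha,\rho)$-RDP means $D_\alpha(P\|Q) \le \rho$, which by the definition of Rényi divergence given above is precisely the moment bound $\mathbb{E}_{Q}[Z^{\alpha}] \le e^{(\alpha-1)\rho}$.

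The heart of the argument is a pointwise inequality that converts the $\alpha$-th moment into a bound on the truncated mean. I would show that for every $z \ge 0$,
\begin{equation*}
(z - e^{\epsilon})_+ \;\le\; A\, z^{\alpha}, \qquad A = \frac{1}{\alpha}\left(\frac{\alpha-1}{\alpha}\right)^{\alpha-1} e^{-(\alpha-1)\epsilon}.
\end{equation*}
The constant $A$ is chosen so that the convex curve $z \mapsto A z^{\alpha}$ (recall $\alpha > 1$) has the affine map $z \mapsto z - e^{\epsilon}$ as a supporting tangent line; solving the tangency conditions $A\alpha (z^\star)^{\alpha-1} = 1$ and $A (z^\star)^{\alpha} = z^\star - e^{\epsilon}$ pins the contact point at $z^\star = \frac{\alpha}{\alpha-1}e^{\epsilon}$ and yields the stated $A$. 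Because $A z^{\alpha}$ is convex it lies above each of its tangent lines, so $A z^{\alpha} \ge z - e^{\epsilon}$ for all $z \ge 0$; together with $A z^{\alpha} \ge 0$ this dominates $(z-e^{\epsilon})_+ = \max(0, z - e^{\epsilon})$.

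Taking $\mathbb{E}_Q[\cdot]$ of the pointwise inequality and inserting the RDP moment bound then yields
\begin{equation*}
\mathbb{E}_{Q}\!\left[(Z - e^{\epsilon})_+\right] \le A\,\mathbb{E}_Q[Z^{\alpha}] \le \frac{1}{\alpha}\left(\frac{\alpha-1}{\alpha}\right)^{\alpha-1} e^{(\alpha-1)(\rho - \epsilon)},
\end{equation*}
so $\mathcal{M}$ is $(\epsilon,\delta)$-DP with $\delta$ equal to this right-hand side. The final step is purely algebraic: I would set this $\delta$ bound equal to the target failure probability, take logarithms, and solve the resulting linear equation in $\epsilon$; isolating $\epsilon$ gives $\epsilon = \rho + \log\!\frac{\alpha-1}{\alpha} - \frac{\log\delta + \log\alpha}{\alpha-1}$, matching the statement.

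I anticipate the main obstacle to be the pointwise inequality, specifically guessing the right comparison function $A z^{\alpha}$ and verifying that it dominates $(z-e^{\epsilon})_+$ \emph{globally} rather than only near the tangent point; the supporting-line view via convexity is what makes this verification immediate. Everything else (the hockey-stick identity, the translation of RDP into a moment condition, and the closing algebra) is routine. A minor care point is the measure-theoretic justification that $P \ll Q$ and that the supremum defining the divergence is genuinely attained at the super-level set of $Z$, but both follow from standard facts and the finiteness of $\rho$.
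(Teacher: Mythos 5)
Your proposal is correct: the hockey-stick identity $\sup_S\left(P(S)-e^{\epsilon}Q(S)\right)=\mathbb{E}_Q\left[(Z-e^{\epsilon})_+\right]$, the tangency computation giving $A=\frac{1}{\alpha}\left(\frac{\alpha-1}{\alpha}\right)^{\alpha-1}e^{-(\alpha-1)\epsilon}$, and the closing algebra all check out and reproduce exactly the stated conversion. The paper itself does not prove this lemma (it imports it from the cited reference), and your moment-plus-convexity argument is essentially the derivation used in that source, so there is no substantive divergence to report.
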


Therefore, we can use RDP for privacy analysis and subsequently convert it into $(\epsilon,\delta)$-DP to achieve a more refined privacy assessment. \dpsgd can be viewed as a composition of the Subsampled Gaussian Mechanism (SGM). Mironov et al. \cite{mironov2019r} give the analysis of the SGM  under RDP.

\begin{lemma}[Privacy Analysis of SGM~\cite{mironov2019r}]
\label{SGMrdp}
Let $SG_{q,\sigma}$ be the SGM with sample rate $q$ and Gaussian noise $\mathcal{N}(0,\sigma^2S_f^2\mathbb{I})$ for some function $f$ with sensitivity $S_f$. Then, $SG_{q,\sigma}$ satisfies $(\alpha,\rho)$-RDP with the condition of 
\begin{equation*}
\small
\rho \leq D_\alpha((1-q)\mathcal{N}(0,\sigma^2)+q\mathcal{N}(1,\sigma^2)||\mathcal{N}(0,\sigma^2)).
\end{equation*}
\end{lemma}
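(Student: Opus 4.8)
The plan is to obtain the $(\alpha,\rho)$-RDP guarantee by reducing the full high-dimensional Subsampled Gaussian Mechanism to the one-dimensional two-component comparison on the right-hand side and then appealing to \defref{deff6}. First I would fix an arbitrary neighboring pair; say $D$ and $D'=D\cup\{x\}$ differ in the single record $x$, and interpret the sample rate $q$ as the (Poisson) probability that $x$ is included in a subsample. Running $SG_{q,\sigma}$ on $D'$, the record $x$ enters the subsample with probability $q$ and is omitted with probability $1-q$; when it is omitted the noisy query is distributed exactly as on $D$, and when it is included the query mean is shifted by $x$'s contribution to $f$, whose norm is at most the sensitivity $S_f$ by \defref{def2}. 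Thus the output on $D$ is the single Gaussian $\mathcal{N}(f(D),\sigma^2 S_f^2\mathbb{I})$, while the output on $D'$ is the mixture $(1-q)\mathcal{N}(f(D),\sigma^2 S_f^2\mathbb{I})+q\,\mathcal{N}(f(D)+v,\sigma^2 S_f^2\mathbb{I})$ with $\|v\|\le S_f$.

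Next I would normalize and collapse to one dimension. Translating by $f(D)$ and rescaling every coordinate by $1/S_f$ turns the noise into $\mathcal{N}(0,\sigma^2)$ and the shift into a vector of length $t=\|v\|/S_f\le 1$. Because the isotropic Gaussian is rotationally symmetric, I may rotate so that $v$ is aligned with the first coordinate axis; then in each coordinate orthogonal to $v$ both distributions have the identical marginal $\mathcal{N}(0,\sigma^2)$, so the likelihood ratio depends on the first coordinate alone, the densities factor over the remaining coordinates, and the multidimensional Rényi divergence equals its one-dimensional counterpart $D_\alpha\!\left((1-q)\mathcal{N}(0,\sigma^2)+q\,\mathcal{N}(t,\sigma^2)\,\big\|\,\mathcal{N}(0,\sigma^2)\right)$ exactly. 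This factorization, written out from the mixture density, is where most of the argument lives; one must also treat the reversed ordering, which yields the companion divergence $D_\alpha(\mathcal{N}(0,\sigma^2)\,\|\,(1-q)\mathcal{N}(0,\sigma^2)+q\mathcal{N}(t,\sigma^2))$. The expression in the statement is the mixture-in-the-first-argument ordering, and in general one takes the larger of the two over all neighboring pairs.

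It then remains to identify the worst case over $t\in[0,1]$, which I expect to be the main obstacle. The goal is to show that $g(t)=D_\alpha((1-q)\mathcal{N}(0,\sigma^2)+q\mathcal{N}(t,\sigma^2)\,\|\,\mathcal{N}(0,\sigma^2))$ is nondecreasing on $[0,1]$, so its supremum is attained at the full-sensitivity shift $t=1$. Writing the divergence through its moment integral $\mathbb{E}_{z\sim\mathcal{N}(0,\sigma^2)}\!\left[\left((1-q)+q\,e^{(2tz-t^2)/(2\sigma^2)}\right)^{\alpha}\right]$, I would establish monotonicity by differentiating in $t$ and checking that the derivative is nonnegative (equivalently, via a symmetry/convexity argument in the shift parameter); this is the one genuinely computational piece. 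Evaluating at $t=1$ yields $\rho\le D_\alpha((1-q)\mathcal{N}(0,\sigma^2)+q\mathcal{N}(1,\sigma^2)\,\|\,\mathcal{N}(0,\sigma^2))$, and since this holds for every neighboring pair, \defref{deff6} delivers the claimed $(\alpha,\rho)$-RDP guarantee.
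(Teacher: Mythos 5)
First, a point of reference: the paper does not prove \lemref{SGMrdp} at all --- it is imported verbatim from Mironov et al.~\cite{mironov2019r} --- so your sketch can only be measured against that reference's argument. Its overall architecture (reduce to one dimension by rotational symmetry, push the shift to the full sensitivity, handle both orderings of the divergence) you have reconstructed correctly.

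There is, however, one genuine gap. You assert that the output of $SG_{q,\sigma}$ on $D$ is the single Gaussian $\mathcal{N}(f(D),\sigma^2S_f^2\mathbb{I})$. It is not: the mechanism subsamples $D$ as well, so the output on $D$ is itself a mixture of Gaussians indexed by the random subset $S\subseteq D$, and the output on $D'=D\cup\{x\}$ is the corresponding mixture in which each component is further split according to whether $x$ was sampled; moreover the induced shift $f(S\cup\{x\})-f(S)$ varies with $S$ for a general $f$ of bounded sensitivity. The clean ``two-component mixture versus one Gaussian'' picture therefore holds only \emph{conditionally} on the subsample restricted to the common records, and to pass from the conditional bound to the unconditional one you must invoke the joint convexity of $(P,Q)\mapsto\int P^{\alpha}Q^{1-\alpha}$ (equivalently, joint quasi-convexity of $D_\alpha$) over that shared conditioning; this step is exactly where the cited proof does real work and it is absent from your sketch. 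A second, smaller omission: RDP (\defref{deff6}) requires a bound for \emph{both} ordered pairs of neighbors, yet the lemma states only the mixture-in-the-first-argument divergence; you acknowledge the reversed ordering but the lemma as written additionally needs the inequality $D_\alpha\bigl((1-q)\mu_0+q\mu_1\|\mu_0\bigr)\ge D_\alpha\bigl(\mu_0\|(1-q)\mu_0+q\mu_1\bigr)$, proved separately in~\cite{mironov2019r}, which must be established rather than merely noted. Your monotonicity-in-$t$ step, by contrast, is sound: differentiating the moment integral and substituting $w=z-t$ reduces it to $\mathbb{E}[\phi(u)\,u]\ge 0$ for an increasing $\phi$ and a symmetric $u$, which follows from the covariance (association) inequality.
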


In this study, we use Lemma~\ref{SGMrdp} to calculate the overall privacy budget. It is worth noting that the Lemma~\ref{SGMrdp} are built based on Poisson sampling. Throughout this paper, all sampling processes adhere to Poisson sampling, where each example has an equal probability of being sampled.

\subsection{DP Stochastic Gradient Descent} \label{sec2:dpsgd}

\dpsgd~\cite{abadi2016deep} is the predominant method for training neural networks with differential privacy guarantees using SGD. The core principle of DP-SGD involves introducing noise to the gradients during each iteration. To effectively add this noise, \dpsgd first bounds the sensitivity of the gradients. Due to the difficulty in predicting gradient norm distributions in deep learning, DP-SGD sets an upper bound $C$ on the ${\mathcal{L}}_2$ norm of the gradients, expressed as:

\begin{small}
\begin{equation}\label{clipfunction}
{\rm Clip}(g,C)=g\left/\max\left(1,\frac{||g||_2}{C}\right)\right.
\end{equation}
\end{small}

This method limits the influence of any single instance on 
$C$, effectively bounding the ${\mathcal{L}}_2$-sensitivity by $C$. DP-SGD introduces Gaussian noise with a standard deviation proportional to $C$ to the aggregated gradients. The algorithm then averages these gradients and updates the parameters as follows:
\begin{small}
\begin{equation}
\theta_{t+1}=\theta_{t}-\frac{\eta}{B}\left(\sum_{i\in \mathcal{B}_t}{\rm Clip}(g_{t,i},C)+\mathcal{N}(0,\sigma^2C^2\mathbb{I})\right)
\label{eq:dpsgd}
\end{equation}
\end{small}

\noindent `$B$' denotes the training batch size. The \dpsgd approach can be viewed as a composition of multiple SGMs. In practice, it is common to use Lemma~\ref{SGMrdp} for a more stringent privacy analysis under RDP, and subsequently convert this into $(\epsilon,\delta)$-DP \cite{andrew2021differentially,xiao2022differentially,yu2021not}. We adopt this strategy in our paper.

\begin{figure}[!t]
    \centering
    \includegraphics[width=0.98\linewidth]{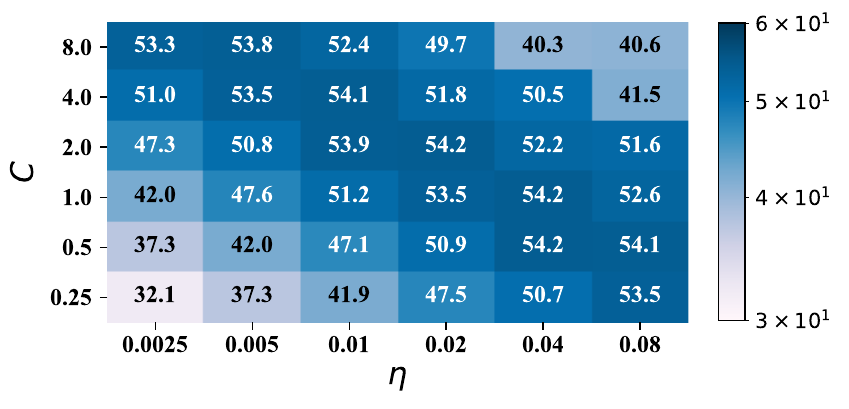}
    \vspace{-4mm}
    \caption{{Classification accuracy of ResNet18 trained on the CIFAR10 dataset, using varying clipping thresholds $C$, learning rates $\eta$, $\epsilon=8$ and SGD optimizer within the \dpsgd framework. }
    }
    \label{fig:heatmap}
\end{figure}

\subsection{Extra Privacy Cost of Hyperparameter Tuning $C$} \label{sec:privacycost}
When implementing DP-SGD to train deep learning models with differential privacy, tuning hyperparameters such as the learning rate, batch size, and clipping threshold is essential. Typically, tuning the hyperparameters entails training models multiple times with various combinations of hyperparameters, which incur extra privacy costs. 
\textit{Adjusting $C$ incurs computational overhead and increases privacy leakage risks.}
As shown in Figure~\ref{fig:heatmap}, a two-dimensional grid search for the optimal learning rate $\eta$ and $C$ (extendable to other dimensions like noise scheduling or model architecture) demands extensive model training across various hyperparameter settings, consuming significant time and computational resources. Additionally, multiple training runs for hyperparameter tuning lead to further privacy leakage~\cite{liu2019private,papernot2021hyperparameter}.

Liu et al.~\cite{liu2019private} proposed a DP hyperparameters tuning algorithm (we call it LT)  that uses random stopping with a hard limit $T$ on the number of iterations. We present the privacy analysis of LT as follows:

\begin{theorem}[Privacy Analysis of LT \cite{liu2019private}]
\label{theorem:liuhyper}
We consider any $\gamma \in [0,1]$ and $\delta_2 > 0$, and define $T$ as $\frac{1}{\gamma}\ln\frac{1}{\delta_2}$. Assuming that each iteration of \dpsgd is $(\epsilon_1, \delta_1)$-differentially private, the LT algorithm then achieves $(\epsilon', \delta')$-differential privacy. $\epsilon'$ is defined as $3\epsilon_1 + 3\sqrt{2\delta_1}$, and $\delta'$ is calculated as $3\sqrt{2\delta_1}T + \delta_2$.
\end{theorem}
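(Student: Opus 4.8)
The plan is to reconstruct the random-stopping analysis of Liu et al., whose point is that a selection procedure built from an $(\epsilon_1,\delta_1)$-DP base mechanism amplifies the privacy loss by at most a factor of three. First I would fix the structure of LT precisely: it runs the per-iteration $(\epsilon_1,\delta_1)$-DP mechanism repeatedly, after each run halting independently with probability $\gamma$, never exceeding $T$ runs, and returning the best candidate seen so far. The role of the hard cap is purely to bound the tail: since the number of runs is a (truncated) geometric variable, the probability of never triggering a coin-flip halt within $T$ rounds is $(1-\gamma)^T \le e^{-\gamma T}$, and the choice $T=\tfrac1\gamma\ln\tfrac1{\delta_2}$ forces this truncation event to have probability at most $\delta_2$. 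This $\delta_2$ will reappear verbatim as one additive term of $\delta'$.

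The core step, which I expect to be the main obstacle, is the pure-DP amplification lemma. I would first treat the idealized case in which each run is $\epsilon_1$-pure-DP and the number of runs is an untruncated $\mathrm{Geometric}(\gamma)$ variable. Writing $G(t)$ and $G'(t)$ for the CDF of a single run's score under neighboring datasets $D,D'$, the score returned by the best-of-$N$ rule has CDF $H(t)=\sum_{n\ge1}\gamma(1-\gamma)^{n-1}G(t)^n=\gamma G(t)/(1-(1-\gamma)G(t))$, and similarly $H'$. The heart of the argument is to bound the ratio $H(t)/H'(t)$ using only the per-run guarantee $e^{-\epsilon_1}\le G(t)/G'(t)\le e^{\epsilon_1}$; a short manipulation of the factor $(1-(1-\gamma)G')/(1-(1-\gamma)G)$ shows the ratio is at most $e^{3\epsilon_1}$, which yields $3\epsilon_1$-DP for the whole selection procedure. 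Getting this bound cleanly, and extending it from the survival function of the score to the full output distribution over arbitrary measurable output sets, is the delicate part.

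Finally I would lift the analysis from pure to approximate DP. Since each run is only $(\epsilon_1,\delta_1)$-DP rather than $\epsilon_1$-pure-DP, I would invoke the standard conversion that an $(\epsilon_1,\delta_1)$-DP mechanism is, up to a coupling that fails with probability on the order of $\sqrt{2\delta_1}$, indistinguishable from an $(\epsilon_1+\sqrt{2\delta_1})$-type pure guarantee. Feeding this into the factor-three amplification of the previous step contributes the additive $3\sqrt{2\delta_1}$ to the final $\epsilon'=3\epsilon_1+3\sqrt{2\delta_1}$, while the failure probability accumulated over the runs (bounded using the cap $T$) contributes $3\sqrt{2\delta_1}\,T$ to $\delta'$. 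Combining this with the truncation term from the first step gives $\delta'=3\sqrt{2\delta_1}\,T+\delta_2$, completing the bound. The main care points are to avoid double-counting the slack from the three sources, namely the $3\times$ amplification, the approximate-to-pure conversion, and the truncation, and to track exactly where the factor $T$ enters the accumulation of $\delta_1$.
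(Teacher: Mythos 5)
The paper does not prove this statement: Theorem~\ref{theorem:liuhyper} is imported verbatim from Liu and Talwar~\cite{liu2019private} and is used only as a black box to account for the privacy cost of hyperparameter tuning, so there is no internal proof to compare your reconstruction against. Judged on its own terms, your sketch has the right skeleton of the original argument --- truncation event of probability $(1-\gamma)^T\le\delta_2$ contributing additively to $\delta'$, a pure-DP amplification lemma for geometric stopping, and a final lift from $(\epsilon_1,\delta_1)$-DP to an approximate pure guarantee with slack $\sqrt{2\delta_1}$ accumulating $T$ times in $\delta'$ --- but the central quantitative step is not actually established. Bounding the ratio of the CDFs $H(t)=\gamma G(t)/(1-(1-\gamma)G(t))$ of the best \emph{score} only yields $e^{2\epsilon_1}$ (one factor of $e^{\epsilon_1}$ from $G/G'$ and one from $(1-(1-\gamma)G')/(1-(1-\gamma)G)=(\gamma+(1-\gamma)\bar G')/(\gamma+(1-\gamma)\bar G)$ with $\bar G=1-G$), and in any case a bound on the score distribution does not give differential privacy of the algorithm's actual output, which is the selected candidate (e.g., the trained model) together with its score.

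The factor of $3$ arises precisely in the step you defer as ``delicate.'' Writing the output distribution over arbitrary measurable sets, the density of the best-of-$N$ output with $N\sim\mathrm{Geometric}(\gamma)$ is $\sum_{k\ge1}\gamma(1-\gamma)^{k-1}\,k\,G(s)^{k-1}\,d\mu(s)=\gamma\,d\mu(s)/\bigl(1-(1-\gamma)G(s)\bigr)^{2}$, where $\mu$ is the single-run output measure and $G(s)$ is the probability that a single run scores at most the score of $s$. The ratio of such densities under neighboring datasets is then bounded by $e^{\epsilon_1}$ from $d\mu/d\mu'$ times the \emph{square} of the factor $(1-(1-\gamma)G')/(1-(1-\gamma)G)\le e^{\epsilon_1}$, giving $e^{3\epsilon_1}$. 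Without this density computation your sketch cannot produce the constant $3$ rather than $2$, so you should either carry out that step or restructure the argument around events of the form ``output lies in $S$ and its score is maximal.'' The remaining bookkeeping you describe for $\epsilon'=3\epsilon_1+3\sqrt{2\delta_1}$ and $\delta'=3\sqrt{2\delta_1}T+\delta_2$ is consistent with the cited result, though you should be careful that the $\sqrt{2\delta_1}$ conversion is applied once per potential run (hence the factor $T$ in $\delta'$) and not double-counted inside the amplification.
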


Theorem~\ref{theorem:liuhyper} details the privacy cost associated with the LT algorithm. As the number of hyperparameter combinations to be tuned increases, the value of $T$ also rises. Under a constant privacy budget, this necessitates a smaller $\delta_1$. Consequently, a reduced $\delta_1$ requires a smaller $\epsilon_1$, which in turn can affect the performance of the model.

\subsection{Challenges of Dynamic Clipping}
\label{subsec:concerns}

A static $C$ may initially seem adequate but can become suboptimal as training progresses. This is because gradients typically diminish over time in standard SGD, causing a previously ideal high $C$ to result in minimal clipping and excessive noise~\cite{du2021dynamic,wei2022dpis}. However, determining an optimal clipping threshold $C$ in DP-SGD presents several significant challenges:

\noindent \textbf{Trade-offs Between Clipping and Noise.}
The selection of $C$ necessitates balancing the bias introduced by clipping against the magnitude of noise, which scales with $C$. In scenarios with constrained privacy budgets, a higher $C$ reduces bias but may degrade performance due to increased noise. A lower $C$ may induce excessive bias from over-clipping, leading to significant information loss and diminished model effectiveness.

\noindent \textbf{Lack of Private Prior Knowledge.}
The distribution of gradient norms is often private and unknown a priori, complicating the prediction of the appropriate clipping threshold. This lack of prior knowledge necessitates heuristic or empirical methods to determine $C$, which may not always yield optimal results.

\noindent \textbf{Hyperparameter Tuning Costs.}
Tuning the clipping threshold incurs significant computational overhead and additional privacy leakage (Section \ref{sec:privacycost}). Identifying the optimal $C$ often requires multiple training runs, each time-consuming and privacy-costly. This highlights the need for methods that can dynamically adjust $C$ without extensive manual tuning.

\section{\dcsgd Design}
\label{sec:DCSGD}

\subsection{Overview}
\label{motivation}

\begin{algorithm}[!t]
\begin{small}
  \caption{\dcsgd}
  \label{alg:dcsgd}
  \LinesNumbered
  \KwIn{Examples $\{ x_1,...,x_N \}$, loss function $\mathcal{L}(\theta)=\frac{1}{N}\sum_i\mathcal{L}(\theta,x_i)$. Parameters: learning rate $\eta$, total noise multiplier $\sigma$, noise multiplier for histogram $\sigma_H$, expected batch size $B$, initial clipping threshold $C_0$, initial histogram range $R_0$, count of histogram bins $b$.}
  $\textbf{Initialize:} \  \theta_0$ randomly
  $\sigma_T \get (\sigma^{-2}-\sigma_H^{-2})^{-1/2}$
  \For{$t=0$ to $T$}{
    Take a random batch $\mathcal{B}_{t}$ with sampling probability $B/N$

    $\textbf{Compute gradient}$
    
    For each $i \in \mathcal{B}_{t}$, compute $g_{t,i} \gets \nabla_{\theta_t}\mathcal{L}(\theta_t,x_i)$

     $\textbf{Compute Gradient Norms}$
    
    For each $i \in \mathcal{B}_{t}$, compute $g_{t,i} \gets ||g_{t,i}||$ and put it into $G_t$
    
    $\textbf{Private Descent}$
    
    Update parameters $\theta$ using Equation (\ref{eq:dpsgd})
    
    $\textbf{Construct Histogram}$

    $\tilde{H_t} \get \text{BuildHistogram}(G_t, R_t, b)$
    
    $\textbf{Adjust Clipping Threshold}$
    
    $C_{t+1}, R_{t+1} \get \text{Adaptive}(\tilde{H_t}, C_{t}, R_{t}, b)$
    
  }
  % \ENDFOR
  \KwOut{model parameters $\theta_{T+1}$}
  % \end{algorithmic}
\end{small}
\end{algorithm}

To optimize the clipping threshold $C$ while preserving privacy, we introduce a novel method that employs differentially private histograms to analyze gradient norm distributions. The key advantage of using histograms is their sensitivity of $1$. Meanwhile, the histogram can provide a comprehensive view of the whole distribution since a histogram is a direct description of the distribution.

We introduce a framework named \textbf{D}ynamic \textbf{C}lipping DP-\textbf{SGD} (\dcsgd), which dynamically adjusts the clipping threshold as outlined in Algorithm~\ref{alg:dcsgd}. Unlike traditional \dpsgd, \dcsgd incorporates two key processes: \textit{histogram construction} (Line 12) and \textit{adaptive adjusting $C$} (Line 14). \dcsgd strategically allocates the privacy budget between model training and updating $C$, as detailed in Section~\ref{subsec:sigmarelation}. During histogram construction, data points are categorized into bins, with outliers allocated to the last bin, and differential privacy is maintained through noise injection. We propose two specific strategies for adjusting the clipping threshold $C$: \dcsgd with Percentile (\dcsgdp) and \dcsgd with Expected Squared Error (\dcsgde).

\noindent \textbf{DCSGD with Percentile (\dcsgdp)}: We introduce a single hyperparameter, $p$, to dynamically adjust the clipping threshold $C$ during training (Deatailed in Section~\ref{percentilemethod}). This approach ensures that $p\%$ of the gradients remain unclipped, effectively clipping with a probability of $1-p$. Distinct from previous methods~\cite{andrew2021differentially, abadi2016deep, golatkar2022mixed}, \dcsgdp requires tuning only this additional hyperparameter and eliminates the need for an external public dataset. Moreover, it guarantees that the adjustment of $C$ adheres to DP.

\noindent \textbf{DCSGD with Expected Squared Error (\dcsgde)}:
We further developed a method that autonomously determines a clipping threshold $C$ without additional hyperparameter tuning (Detailed in Section~\ref{msemethod}).
By adaptively adjusting $C$ to minimize the expected error of individual gradients (as defined in Equation~\ref{equamse}), \dcsgde alleviates the effects of clipping and noise on individual gradients within the bounds of DP. This pioneering approach eliminates the need for hyperparameter adjustments, significantly enhancing both time efficiency and privacy protection.

The newly determined clipping threshold is applied in subsequent iterations for two primary reasons: First, as stated in Theorem~\ref{theorem:allocation}, our methodology involves the simultaneous publication of both the gradient and a one-hot histogram vector. Adjusting the current gradient with a newly adapted threshold post-histogram analysis would contravene this assumption. Second, in practical implementations, a small physical batch size is frequently used to simulate a larger logical batch size using gradient accumulation techniques. Applying the new clipping threshold within the same iteration would necessitate retaining all unclipped gradients until the logical batch size is achieved, thus restricting the utility of gradient accumulation methods.

\begin{algorithm}[!t]  %其中这里面不能有H不然会报错，不过不影响结果 
        \small
	\caption{HitogramBuild}%算法名字
	\label{alg:histogrambuild}
	\LinesNumbered %要求显示行号
	\KwIn{Unclipped gradients' norms $\mathcal{G}=\{G_1,G_2,...,G_{|\mathcal{B}_t|}\}$, histogram range $R_t$, bin count $b$, noise multiplier $\sigma_H$}%输入参数

        \textbf{Initialize} $H_t[b]=\{0,...,0\}$

        % $\textbf{Construct Histogram}$
	
	\For{$G_i \in \mathcal{G}$} {

            $index \gets \max (n-1,\lfloor \frac{bG_i}{R_t} \rfloor)$
	    
	    $H_t[index]\gets H_t[index] + 1$
	}
	
	% $\textbf{Add Noise}$
	
	$\tilde{H}_t\gets H_t+\mathcal{N}(0,\sigma_H^2\mathbb{I})$

	 \KwOut{Noisy histogram $\tilde{H}_t$}
\end{algorithm}

\subsection{Histogram Construction}
Accessing precise gradient distribution information is crucial for adaptively determining the clipping threshold $C$. Algorithm~\ref{alg:histogrambuild} outlines the construction of a gradient norm histogram in each training iteration under DP constraints. Data points that exceed the predefined range are allocated to the last bin. Notably, we address the inherent bias associated with the range $R$ in \dcsgdp and \dcsgde through adaptive adjustment strategies.

For DP compliance, noise is added to each histogram bin. The sensitivity of histogram publication is 1, ensuring minimal changes caused by adding or removing a single example. At each iteration, gradients and corresponding histogram information are published with Gaussian noise for privacy. Detailed privacy analysis is provided in Section~\ref{subsec:sigmarelation}. In practice, a small portion of the privacy budget is allocated to histogram publication, and $\sigma_H$ is set to minimize changes in gradient noise without incurring extra tuning costs. Additionally, a sufficiently large batch size is recommended to mitigate bias and reduce the impact of noise.

\subsection{\dcsgd with Percentile}
\label{percentilemethod}

In \dcsgd, the clipping threshold determined in the current iteration is applied to clip gradients in the subsequent iteration. Ideally, setting a percentile-based threshold should involve considering the entire training dataset, but this incurs significant time and privacy costs. In \dpsgd, where each training example is equally likely to be sampled for constructing a mini-batch. Therefore, \dcsgdp estimates the Percentile using the sampled batch and leverages the histogram of gradient norms instead of accessing the complete dataset, as presented in Algorithm~\ref{alg:hist}. \dcsgdp consists of two steps: (1) determining the clipping threshold based on the gradient norm distribution to ensure that $p\%$ of gradients remain unclipped; (2) adjusting the histogram range.

\noindent \textbf{Estimate the Percentile.} To estimate the Percentile, we calculate the midpoint of each bin as the estimated norm value. We then accumulate counts starting from the leftmost bin and continue to the right until the cumulative total exceeds $p\%$ of the overall norm count, which is determined by summing across all bins. Disregarding the added noise in the histogram, the true and estimated percentiles are within the same bin, yielding a maximum error of $\left|\frac{R}{2b}\right|$. Given the iterative nature of the $C$ adjustment, this error remains acceptable with careful selection of $R$ and $b$.

Note that $b$ and $R$ do not incur additional tuning costs.
Figure~\ref{fig:percentile_hist} depicts our method's estimation results on synthetic data. With a suitable range ($R=150$ covering most data), our approach closely approximates the true result, demonstrating low sensitivity to $\sigma_H$ and $b$. More results on true datasets can be found in Section~\ref{RQ3}.

\begin{algorithm}[!t] 
	\caption{Adaptive $C$ with Percentile}
        \small
	\label{alg:hist}
	\LinesNumbered 
	\KwIn{noisy histogram $\tilde{H}_t$, bin count $b$, current clipping threshold $C_t$, histogram range $R_t$, percentile $p$.}
        \textbf{Initialize} $S=0$
        
        $S'\get \sum_{i=0}^{b-1} \tilde{H}_t[i]$

        \For{$i \in [b]$}{
            $S+=\tilde{H}_t[i]$
            
            Use the middle of the current bin as $C_{t+1}$
            
            \textbf{If} $S\geq pS'$, \textbf{break}
        }
        
        $R_{t+1}=2C_{t+1}$
        
	 \KwOut{$C_{t+1}, R_{t+1}$}
\end{algorithm}

\begin{figure}[!t]
    \centering
    \subfloat{
        \label{fig:p_syn}\includegraphics[width=1.0\linewidth]{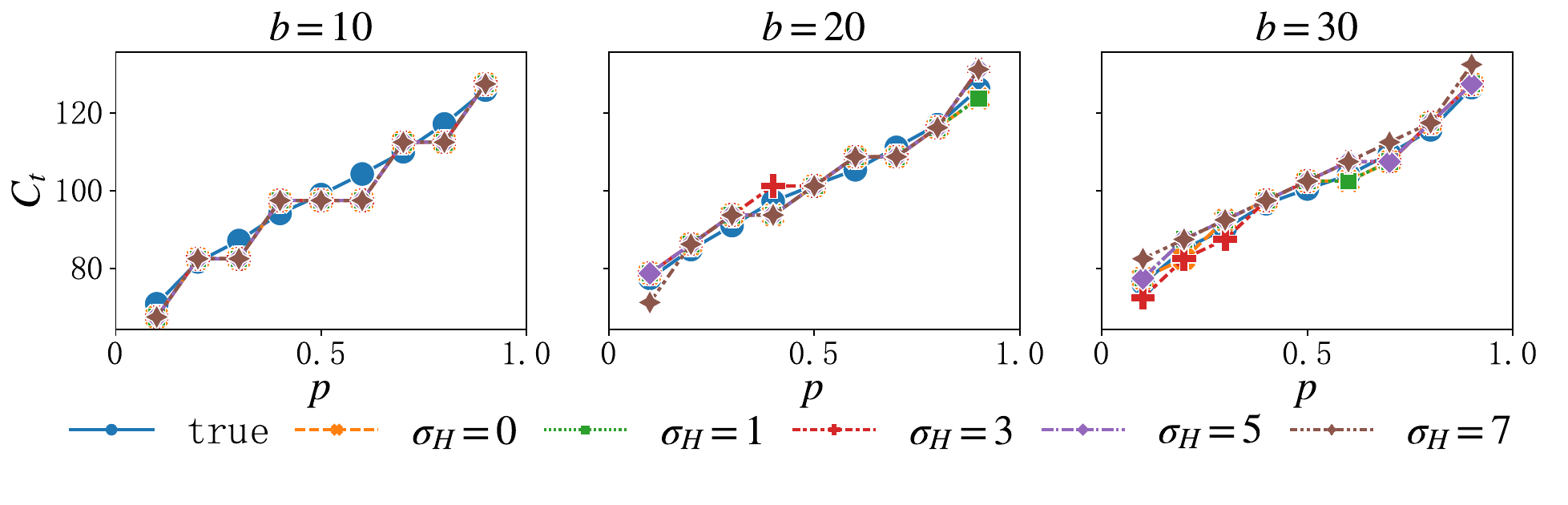}
    }
    \vspace{-2mm}
    \caption{$C_t$ according to different percentile $p$ on a set of 256 random synthetic data drawn from $\mathcal{N}(100,20^2)$ with different histogram construction and noise level. Each subfigure uses a different histogram bin count $b$ and generates data separately.}
    \label{fig:percentile_hist}
\end{figure}

\smallskip \noindent \textbf{Adjust the Range of Histogram.}
Determining the histogram's range is crucial for precisely identifying desired gradient norms. If the range is too narrow, outliers cluster in the last bin, resulting in inaccurate estimations of $C$. Conversely, an overly broad range may produce less precise results. Additionally, the variability in gradient norms across training epochs, models, and datasets complicates setting a fixed range that suits all scenarios. To this end, we propose an adaptive method that dynamically determines the histogram range based on the current clipping threshold $C$. Specifically, we configure the histogram within a $[0, 2C]$ range, where $C$ denotes the existing clipping threshold. If the histogram fails to yield a new clipping threshold corresponding to the desired Percentile, indicated when the output is the median of either the first or last bin, this output is then adopted as the new current clipping threshold, prompting an adjustment of the range for improved histogram accuracy in subsequent iterations. Given the exponential increase or decrease in the histogram range, only a minimal number of repetitions are required to rapidly determine an appropriate range. It is important to note that while this method starts with an initial value of $C$, it can be arbitrarily set as the algorithm is designed to refine it effectively.

\vspace{-0.5cm}
\subsection{\dcsgd with Expected Squared Error}
\label{msemethod}

While the method in \secref{percentilemethod} eliminates the need to tune the clipping threshold $C$, it introduces a new dataset-dependent hyperparameter, Percentile $p$. In this subsection, we devise a method that autonomously determines a clipping threshold without additional hyperparameter tuning.

\noindent {\bf Algorithm Design.}
\dpsgd updates the model parameters by $\frac{1}{B} (\sum_{i\in\mathcal{B}_t} {\rm Clip}(g_{t,i}, C_t) +  \mathcal{N}(0,\sigma_T^2C^2\mathbb{I}^d))$,  where ${\rm Clip} (g_{t,i},C_t)$ means clipping the gradient with clipping threshold $C_t$ at iteration $t$. Focusing on a single gradient $g_{t,i}$, the operation on it is equivalent to $\tilde{g}_{t,i}={\rm Clip}(g_{t,i},C)+\mathcal{N}(0,\frac{\sigma_T^2C^2\mathbb{I}}{|\mathcal{B}_t|^2})$, where $\tilde{g}_{t,i}$ is the private estimation of gradient $g_{t,i}$. {We define $\bar{g}_{t,i}={\rm Clip}(g_{t,i},C)$, and consider the randomness in noise and data sampling at iteration $t$, the expected squared error between $g_{t,i}$ and $\tilde{g}_{t,i}$ can be expressed as:}
\begin{small}
\begin{align}\label{equamse}  
    E_{t,C}&= \mathbb{E}[ (\tilde{g}_{t,i}-g_{t,i})^2] \nonumber \\
    &=\mathbb{E}[ (\tilde{g}_{t,i}-\bar{g}_{t,i})^2]+2\mathbb{E}[(\tilde{g}_{t,i}-\bar{g}_{t,i})(\bar{g}_{t,i}-g_{t,i})] \nonumber \\
    &+\mathbb{E}[(\bar{g}_{t,i}-g_{t,i})^2] \nonumber \\
    &\overset{(i)}{=}\mathbb{E}[(\tilde{g}_{t,i}-\bar{g}_{t,i})^2]+\mathbb{E}[(\bar{g}_{t,i}-g_{t,i})^2] \nonumber \\
    &\overset{(ii)}{=}\frac{\sigma_T^2C^2d}{|\mathcal{B}_t|^2}+(\frac{1}{N}\sum_{j=1}^N||g_{t,j}*\min (1,\frac{C}{||g_{t,j}||})-g_{t,j}||^2) \nonumber \\
    &=\underbrace{\frac{\sigma_T^2C^2d}{|\mathcal{B}_t|^2}}_{Variance}+\underbrace{(\frac{1}{N}\sum_{j=1}^N \max(||g_{t,j}||-C,0)^2)}_{Bias}
\end{align}
\end{small}

\noindent {(i) follows from $\tilde{g}-\bar{g}=\mathcal{N}\left(0,\frac{\sigma_T^2C^2\mathbb{I}}{|\mathcal{B}_t|^2}\right)$, indicating that this term is independent of the gradient and each of its dimensions has an expected value of 0.}
As for $(ii)$, {we note that each noise dimension is independent and the variance is $\frac{\sigma_T^2C^2}{|\mathcal{B}_t|^2}$. Given that the noise has $d$ dimensions, the total variance caused by noise is $\frac{\sigma_T^2C^2d}{|\mathcal{B}_t|^2}$.}  For the bias caused by clipping, we calculate the expected value using the average overall gradients, as each example is uniformly sampled from the entire dataset.

\begin{figure}
    \centering
    \subfloat{
        \label{fig:mse_syn}\includegraphics[width=0.95\linewidth]{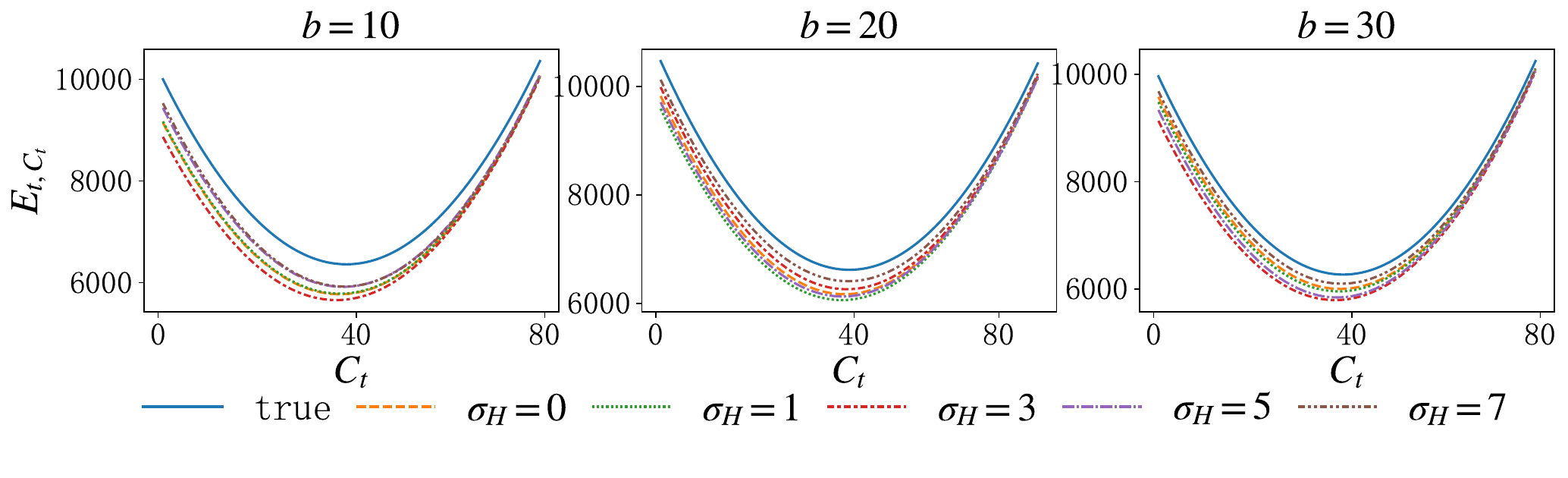}
    }
    \vspace{-2mm}
    \caption{$E_{t,C_t}$ of different $C_t$ on a set of 256 random synthetic data from $\mathcal{N}(100,20^2)$  with different histogram structures and noise levels. To compute the variance term, $\sigma_T=1, B=256, d=100000$. The histograms all have $R=120$. Each subfigure uses a different histogram bin count $b$ and generates data separately.}
    \label{fig:msehist}
\end{figure}

\begin{figure}[!t]
\vspace{-0.1cm}
    \centering
    \subfloat{
        \includegraphics[width=0.96\linewidth]{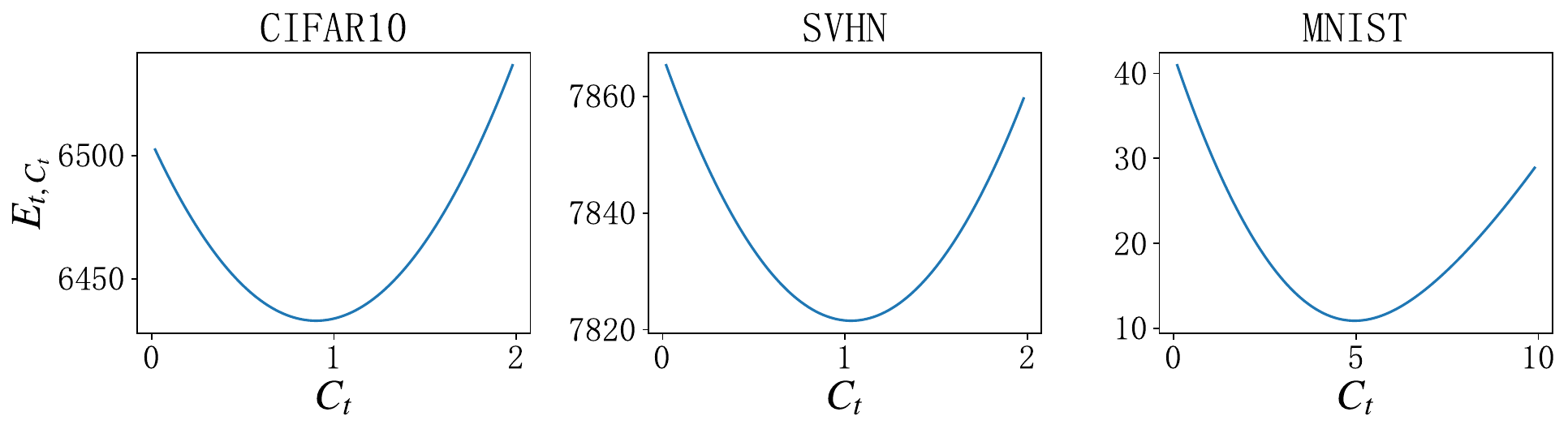}
    }
    \vspace{-2mm}
    \caption{$E_{t,C_t}$ of different $C_t$ at some iteration for CIFAR10 on ResNet18, SVHN on ResNet34, MNIST on CNN. Privacy budget $(\epsilon=8,\delta=1/|D|)$, batch size $B=256$, clipping threshold $C=1$, using default Adam optimizer.}
    \label{fig:utype}
\end{figure}

In Equation (\ref{equamse}), the larger $C$ results in stronger noise, which increases variance. However, a larger $C$ simultaneously reduces bias, as fewer gradients are clipped, and vice versa. The bias and variance go in opposite directions, and an optimal $C$ should exist. {Figure~\ref{fig:utype} illustrates the expected squared error for different clipping thresholds across various datasets on ResNet18, supporting our analysis. We can fine-tune $C$ by calculating the expected squared error and selecting the $C$ value that minimizes this error.}

{To calculate the expected squared error for various clipping thresholds $C$ while maintaining a low differential privacy cost, we have modified Equation ~
(\ref{equamse}) accordingly.} First, in \dpsgd, we apply Poisson sampling to construct the mini-batch; thereby, the mini-batch's data count differs among iterations. The actual count will leak the privacy. Moreover, the new clipping threshold at the current iteration will be utilized at the next iteration, and the count of the next iteration is unpredictable. Therefore, we use expected batch size $B$ instead of $|\mathcal{B}_t|$ in variance term. Second, to save privacy and time, we can estimate the expectation of bias terms on the mini-batch because each example in the mini-batch is sampled uniformly from the whole dataset. In summary, we get:
\begin{equation}\footnotesize
\label{equamsefinal}
\mathbb{E}[ (\tilde{g}_{t,i}-g_{t,i})^2]=\underbrace{\frac{\sigma_T^2C_t^2d}{B^2}}_{\text{Variance}}+\underbrace{(\frac{1}{|\mathcal{B}_t|}\sum_{j\in \mathcal{B}_t} \max(||g_{t,j}||-C_t,0)^2)}_{\text{Bias}}
\end{equation}

\begin{algorithm}[!t]  %其中这里面不能有H不然会报错，不过不影响结果
        \small
	\caption{Adaptive $C$ with Minimizing Expected Squared Error}%算法名字
	\label{alg:msesgd}
	\LinesNumbered %要求显示行号
	\KwIn{noisy histogram $\tilde{H}_t$, bin count $b$, current clipping threshold $C_t$, histogram range $R_t$.}%输入参数

    $S'\get \sum_{i=0}^{b-1} \tilde{H}_t[i]$
	
    \tcp {Computing Error and Selecting $C$}

    \While{True}{
        % $\textbf{Build candidate Set}$

        $\mathcal{S} \gets \emptyset$

        For $i \in [1,20]$, $\mathcal{S} \gets \mathcal{S} \cup ( iC_t/10)$
	% $\textbf{Choosing $C$}$
	
	\For{$C' \in \mathcal{S}$} {
          Compute the bias term $B$ by using the median of each bin to estimate the norms in it.

          Compute  $E_{C'}$ with \eqref{equamsefinal}
	   
	}
        $C_{t+1}\gets {\rm arg}\underset{C'}{\min}(E_{C'})$ 

        Repeat by setting $C_t=C_{t+1}$ if $C_{t+1}$ is the boundary of candidates

    }
    \tcp {Adjusting the range of histogram}
    \If{$\tilde{H}_t[b-1]\geq 0.5S'$}{
        $R_{t+1}=2R_t$
    }
    \ElseIf{$\sum_{i=b/2}^{b-1}\tilde{H}_t[i]\leq S'/b$}{
        $R_{t+1}=0.5R_t$
    }

	 \KwOut{$C_{t+1}, R_{t+1}$}
  % \vspace{-0.2mm}
\end{algorithm}

To determine an optimal value for $C$ with minimal expected squared error, as described in Equation (\ref{equamse}), it is necessary to obtain the norms of the gradients to calculate the bias term. 
To ensure DP, we use a histogram-based estimation (similar to what we did for \dcsgdp) of the norm distribution and subsequently perform an approximate computation of the bias term using the estimated results.
Specifically, as shown in Algorithm~\ref{alg:msesgd}, it consists of five steps: (1) get the estimation of data count at the current iteration; (2) create a candidate set of potential values for the clipping threshold $C$; (3) approximate computation of the expected squared error using the noisy histogram and selection of the optimal $C$ with the smallest expected squared error; (4) verification if the candidate set contains a value of $C$ in proximity to the optimal one. If not, the algorithm is repeated; (5) adjust the histogram range.

\smallskip \noindent \textbf{Estiamation of $E_{t,C_t}$}. {To estimate $E_{t,C_t}$, we use the midpoint of each bin in the noisy histogram as the estimated norm for that bin. Additionally, our noisy histogram designates the last bin for outliers. While these two practices might introduce a certain degree of bias in estimating the bias term, our primary focus is on the relative magnitudes of $E_{t,C_t}$ across different $C_t$ values, rather than on exact values.
Figure~\ref{fig:msehist} illustrates the variation of $E_{t,C_t}$ for a set of random synthetic data under different histogram structures and noise levels as $C$ changes. $R=120$ implies that a significant amount of data lies outside the histogram range, causing the overall estimate to be relatively smaller compared to the true value. However, regardless of the settings for $\sigma_H$ and $b$, our approximate calculation curve closely tracks the true curve. This consistency ensures that we can determine a nearly optimal clipping threshold. The above analysis indicates that our method exhibits low sensitivity to the $\sigma_H$, $b$, and $R$, thus obviating additional tuning costs.

\smallskip \noindent \textbf{Construction of Candidate Set.} Compared to \dcsgdp, the main distinction lies in constructing the candidate set for $C$. Considering the infinite possible values for $C$, we employ a strategy to build and select from a finite set of candidates. Given the characteristic `U' shape of the expected squared error curve to $C$, as depicted in Figure~\ref{fig:utype}, we implement the exponential growth strategy outlined in Algorithm~\ref{alg:hist}.
Specifically, we generate 20 candidate clipping thresholds based on the current value of $C$, computed as $0.1C, 0.2C,\cdots, 2C$. We then compute the expected error for each candidate. Notably, this computation, based on the noisy histogram, preserves privacy by adhering to the post-processing property of differential privacy. If the candidate yielding the smallest expected squared error is either $0.1C$ or $2C$, it is established as the new clipping threshold.
This action triggers repetitive execution of the selection process, ensuring the algorithm explores a broad range of thresholds and ultimately converges to a value close to the optimal $C$, situated at the lowest point of the curve.

\smallskip \noindent \textbf{Adjust the Range of Histogram.}
{We opt for a histogram range of $[0,2C]$ as we focus primarily on the norms within the specific Percentile of gradients' norms.
While our method is not overly sensitive to the exact choice of $R$, since our concern lies more with the relative magnitudes of $E_{t,C_t}$ for different $C_t$ values than with precise values, we still aim to adjust the range adaptively to accommodate the variability of gradient norms across various scenarios. For setting the range, one could either conduct experiments on a similar public dataset or employ an adaptive approach as specified in Algorithm~\ref{alg:msesgd}, which leverages the similarity in gradient distributions across neighboring training iterations. Given that \dcsgde is not highly sensitive to the histogram range $r$, as shown in Figure~\ref{fig:msehist}, we employ a ``cautious updates'' strategy to refine the range: (1) if the count in the rightmost bin exceeds $0.5|\mathcal{B}t|$, suggesting the range is too narrow for accurate estimation, we adjust $R_{t+1}$ to be double the current $R_{t}$; (2) to avoid an extensive range, if the sum of counts in the right half of the bins is less than $\frac{|\mathcal{B}t|}{n}$, we reduce $R_{t+1}$ to half of $R_{t}$.

\section{Theoretical Analysis}
\label{theoremanaly}
% \vspace{-0.2cm}
\subsection{Privacy Analysis}
\label{subsec:sigmarelation}
In \dcsgd, we split the total noise multiplier $\sigma$ into two parts: $\sigma_T$ for training and $\sigma_H$ for clipping threshold updating. Then, the privacy cost of \dcsgd with $\sigma_T$ and $\sigma_H$ is equivalent to vanilla \dpsgd with $\sigma$. The related theorem and proof are presented as follows:

\begin{theorem}
\label{theorem:allocation}
{\Cref{alg:dcsgd} uses Gaussian noise with a noise multiplier of $\sigma_T$ to perturb the gradient, and noise with a noise multiplier of $\sigma_H$ to perturb the histogram. This approach is equivalent to the standard \dpsgd, using a noise multiplier $\sigma$ for gradient perturbation. This equivalence holds under the calculation of privacy costs when:}
\begin{equation}
    \sigma_T = (\sigma^{-2}-\sigma_H^{-2})^{-1/2}
\end{equation}
\end{theorem}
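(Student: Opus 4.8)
The plan is to show that the per-iteration mechanism of \Cref{alg:dcsgd}, which jointly releases the noisy gradient sum and the noisy histogram computed on the \emph{same} Poisson-subsampled batch, is itself a Subsampled Gaussian Mechanism whose effective noise multiplier is exactly $\sigma = (\sigma_T^{-2}+\sigma_H^{-2})^{-1/2}$. The central point, and the reason the statement is nontrivial, is that because the gradient and the histogram are derived from one and the same subsampled batch, the subsampling amplification must be applied to the \emph{joint} release rather than separately to each component; a naive composition of two independent SGMs with multipliers $\sigma_T$ and $\sigma_H$ would double-count the sampling and yield a strictly looser bound.

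First I would write the combined per-round output as a single Gaussian mechanism on the product space $\mathbb{R}^d\times\mathbb{R}^b$. A single example contributes the pair $(\bar g_i, e_i)$, where $\bar g_i = {\rm Clip}(g_i,C)$ satisfies $\|\bar g_i\|_2 \le C$ and $e_i$ is the one-hot bin indicator with $\|e_i\|_2 = 1$ (so the histogram $\mathcal{L}_2$-sensitivity is $1$). The injected noise is $\mathcal{N}(0,\sigma_T^2 C^2\mathbb{I}_d)$ on the gradient block and $\mathcal{N}(0,\sigma_H^2\mathbb{I}_b)$ on the histogram block, and these two Gaussians are independent. Next I would rescale coordinates to make the noise isotropic: divide the gradient block by $\sigma_T C$ and the histogram block by $\sigma_H$. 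Under this invertible linear map the noise becomes standard $\mathcal{N}(0,\mathbb{I}_{d+b})$, while the per-example shift becomes $(\bar g_i/(\sigma_T C),\, e_i/\sigma_H)$, whose squared norm is bounded by
\[
\frac{\|\bar g_i\|_2^2}{\sigma_T^2 C^2} + \frac{\|e_i\|_2^2}{\sigma_H^2} \;\le\; \frac{1}{\sigma_T^2} + \frac{1}{\sigma_H^2} \;=\; \frac{1}{\sigma^2}.
\]

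Then I would invoke the rotational invariance of the isotropic Gaussian to align this shift with a single axis, reducing the analysis to a scalar problem, together with the scale-invariance of the R\'enyi divergence (multiplying both $P$ and $Q$ by the common factor $\sigma$) to match it term-for-term with the scalar mixture appearing in \lemref{SGMrdp}, namely $(1-q)\mathcal{N}(0,\sigma^2)+q\mathcal{N}(1,\sigma^2)$ against $\mathcal{N}(0,\sigma^2)$ with sampling rate $q = B/N$. This shows the joint per-round mechanism satisfies exactly the RDP bound of an SGM with multiplier $\sigma$, after which composing across the $T$ iterations (as in the standard \dpsgd accounting) yields identical RDP, and hence, via \lemref{convert}, identical $(\epsilon,\delta)$, to vanilla \dpsgd run with a single multiplier $\sigma$.

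I expect the main obstacle to be the careful bookkeeping of the subsampling in the joint space: one must argue that ``present/absent in the batch'' acts simultaneously on both blocks, so that the mixture structure is the single-axis SGM mixture rather than a product of two independent mixtures, and verify that the worst case over shift \emph{directions} is governed purely by the $\ell_2$ norm $1/\sigma$ (which is where rotational invariance enters). A secondary check is confirming the histogram $\mathcal{L}_2$-sensitivity is $1$ under example-level neighboring, since adding or removing one example perturbs exactly one bin by one count, so that the Pythagorean-type relation $\sigma^{-2}=\sigma_T^{-2}+\sigma_H^{-2}$ closes as claimed.
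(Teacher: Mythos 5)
Your proposal is correct and takes essentially the same route as the paper's proof: both view the per-example release as the joint vector $(\bar g_{t,i},H_{t,i})$, rescale the gradient block by $\sigma_T C$ and the histogram block by $\sigma_H$ so the injected noise becomes standard isotropic Gaussian on $\mathbb{R}^{d+b}$, bound the rescaled per-example norm by $\sqrt{\sigma_T^{-2}+\sigma_H^{-2}}$, and read off the effective multiplier $\sigma=(\sigma_T^{-2}+\sigma_H^{-2})^{-1/2}$ as an equivalent \dpsgd instance. Your added remarks on treating the subsampling jointly and reducing to the scalar SGM mixture via rotational invariance are sound and merely make explicit steps the paper leaves implicit.
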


\begin{proof}
In \dcsgd, each example, in addition to publishing its own gradient, also releases the gradient norm falling within which bin of the histogram that can be represented using one-hot encoding $H_{t,i}$. And the histogram can be constructed by $H_t=\sum_{i \in \mathcal{B}_t} H_{t,i}$. To analyze the noise added separately to the gradient and the histogram as a whole, we make a conceptual modification to the algorithm that does not influence the privacy properties like \cite{andrew2021differentially}. Instead of publishing $(g_{t,i},H_{t,i})$, example $i$ publish $(\bar{g}_{t,i}',H_{t,i}')=(\bar{g}_{t,i}/\sigma_TC,H_{t,i}/\sigma_H)$, where $\bar{g}_{t,i}$ is the gradient after clipping. Let $d$ represent the dimension of gradient, and the computation of averaged noise gradient and aggregated noise histogram can be written as:

\begin{equation}\footnotesize
    \tilde{g}=\frac{\sigma_TC}{B}(\sum_{i\in B_t}\bar{g}_{t,i}'+\mathcal{N}(0,\mathbb{I}^d)),\tilde{H}=\sigma_H(\sum_{i\in B_t}H_{t,i}'+\mathcal{N}(0,\mathbb{I}^b)) 
\end{equation}

Then, the noise addition can be rearranged as follows:
\begin{equation}
\label{equ:final}\footnotesize
    \sum_{i\in B_t}(\bar{g}_{t,i}',H_{t,i}')+\mathcal{N}(0,\sigma^2 \Delta^2 \mathbb{I}^{d+b}),\sigma \Delta=1
\end{equation}
% \qed
\end{proof}

\noindent The above Equation is equal to \dpsgd with gradients in $d+b$ dimensions. $\sigma$ is the noise multiplier and $\Delta$ is the clipping threshold that is the upperbound of $||\bar{g}_{t,i}',H_{t,i}'||$. Then we handle $\Delta$:
\begin{equation}\footnotesize
\label{equ:sensitivity}
\begin{split}
    ||(\bar{\textbf{g}}_{t,i}',H_{t,i}')||&=\sqrt{\textbf{g}_{t,i,1}'^2+...+\textbf{g}_{t,i,d}'^2+H_1'^2+...+H_b'^2}\\
    &\overset{(i)}{=}\sqrt{||\bar{\textbf{g}}_{t,i}'||_2^2+H_k'^2}\\
    &\overset{(ii)}{\leq}\sqrt{(\frac{C}{\sigma_TC})^2+(\frac{1}{\sigma_H})^2} \\
    &=\sqrt{(\frac{1}{\sigma_T})^2+(\frac{1}{\sigma_H})^2} =\Delta \\
\end{split}
\end{equation}

In the above derivation, $(i)$ is due to the definition of $\mathcal{L}_2$ norm and $H_{t,i}$ is a one-hot vector so that only $H'_k$ is not 0; $(ii)$ is due to $\tilde{g}_{t,i}$ is clipped by $C$. Then we can get $\sigma_T = (\sigma^{-2}-\sigma_H^{-2})^{-1/2}$ utilizing $\sigma \Delta=1$.

For the privacy analysis of \dcsgdp and \dcsgde. We have shown the privacy analysis of \dcsgd can be converted to that of \dpsgd. Therefore, \dcsgdp is differentially private. Then, in both \dcsgdp and \dcsgde, the extra computation is based on the published noisy histogram. With the post-processing property of DP, the additional computation does not incur any privacy cost. In summary, both \dcsgdp and \dcsgde satisfy DP, and the privacy cost can be computed by converting it into an equivalent \dpsgd with \Cref{theorem:allocation}.

\subsection{Convergence Analysis}
\label{convergeanaly}

In this section, we present the convergence analysis of our proposed methods for vanilla SGD optimizer. Inspired by the work of Du et al.~\cite{du2021dynamic}, we adopt a similar convergence analysis framework. The convergence analysis results for \dcsgdp are summarized in \Cref{convergence}. The convergence analysis aims to demonstrate the impact of our dynamic clipping operation on convergence instead of directly illustrating the superiority of our methods on convergence.

\begin{theorem}[Convergence Analysis of \dcsgdp]
\label{convergence}
Let $\mathcal{L}(\theta)$ denote the loss function, and $g_t=\frac{1}{N}\sum_{i=0}^{N}g_{t,i}=\nabla\mathcal{L}(\theta_t)$ represent the actual gradient of $\mathcal{L}(\theta_t)$, where $N$ is the dataset size. The estimated gradient after clipping at iteration $t$ is represented as $\bar{g_t}=\frac{1}{B}\sum_{i\in\mathcal{B}t}{\rm Clip}(g{t,i},C)$, with $B$ being the expected batch size. Constants $L$ and $G$ are given (from the assumption that $L$-smooth and gradient norm is bounded), $d$ is the dimension of gradients, $p$ is the Percentile with $p'=1-p$, $T$ is the total number of iterations, and $q=B/N$ is the sampling rate. If we disregard the bias from histogram estimation and the privacy cost of the histogram and set $\sigma_H=\sigma$. Then, for any $\epsilon<c_1\gamma^2T, \delta>0, \sigma= c_2\frac{\q\sqrt{T\ln(1/\delta)}}{\epsilon}$, \dcsgdp satisfies the $(\epsilon,\delta)-DP$ guarantee, while \dcsgdp satisfies $(\epsilon,\delta)-DP$ guarantee, we have:

% \begin{small}
\begin{equation}\footnotesize
\label{mainDCSGDP}
\begin{split}
&\frac{1}{T}\mathbb{E}[\sum_{t=1}^T||\nabla\mathcal{L}(\theta_t)||^2] \\
&\leq\frac{1}{T\eta}(\mathbb{E}[\mathcal{L}(\theta_{1})]-\mathbb{E}[\mathcal{L}(\theta_{T+1})])+\frac{L\eta}{T}\sum_{t=1}^T\mathbb{E}[||\bar{g_t}||^2]\\
&+\underbrace{\frac{L\eta\sigma^2d}{B^2T}\sum_{i=1}^TC_t^2}_{variance}+\underbrace{\frac{1}{T}\mathbb{E}[\sum_{t=1}^T||g_t||p'G]}_{bias} \\
&\leq \mathcal{O}(\frac{1}{T\eta})+\mathcal{O}(\eta)+\mathcal{O}(\frac{T\eta d \ln(1/\delta)}{N^2\epsilon^2})+\underbrace{p'G^2}_{bias}
\end{split}
\end{equation}
% \end{small}
\noindent By setting $T\eta=\frac{N\epsilon}{\sqrt{d\ln(1/\delta)}}$ and $\eta=\frac{\sqrt{d\ln(1/\delta)}}{N\epsilon}$, we get:
\begin{equation*}\footnotesize
\label{fin}
\begin{split}
\frac{1}{T}\mathbb{E}[\sum_{t=1}^T||\nabla\mathcal{L}(\theta_t)||^2]\leq\mathcal{O}(\frac{\sqrt{d\ln(1/\delta)}}{N\epsilon})+{\rm bias}
\end{split}
\end{equation*}
\end{theorem}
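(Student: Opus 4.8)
The plan is to run the standard non-convex descent analysis for clipped, noisy SGD and to isolate the clipping bias using the percentile property. First I would invoke $L$-smoothness of $\mathcal{L}$ to obtain the descent inequality $\mathcal{L}(\theta_{t+1}) \leq \mathcal{L}(\theta_t) - \eta\langle \nabla\mathcal{L}(\theta_t), \tilde{g}_t\rangle + \frac{L\eta^2}{2}\|\tilde{g}_t\|^2$, where $\tilde{g}_t = \bar{g}_t + \frac{1}{B}\mathcal{N}(0,\sigma^2 C_t^2\mathbb{I}^d)$ is the privatized update direction actually applied to $\theta_t$. Conditioning on $\theta_t$ and taking expectation over the injected Gaussian noise and the Poisson/uniform sampling, the zero-mean noise drops out of the linear term, and uniform sampling gives $\mathbb{E}[\bar{g}_t\mid\theta_t] = \frac{1}{N}\sum_{i=1}^N {\rm Clip}(g_{t,i},C)$, so only the clipping (not the sampling) separates this expectation from $g_t = \nabla\mathcal{L}(\theta_t)$.

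The key step is the decomposition $\langle g_t, \mathbb{E}[\bar{g}_t]\rangle = \|g_t\|^2 + \langle g_t, \mathbb{E}[\bar{g}_t]-g_t\rangle$. I would bound the second (bias) term by Cauchy--Schwarz, $|\langle g_t, \mathbb{E}[\bar{g}_t]-g_t\rangle| \leq \|g_t\|\cdot\frac{1}{N}\sum_i \|{\rm Clip}(g_{t,i},C)-g_{t,i}\|$, and then observe that each summand vanishes when $\|g_{t,i}\|\leq C$ and equals $\|g_{t,i}\|-C \leq G$ otherwise. The percentile construction guarantees that only a $p' = 1-p$ fraction of the per-example gradients exceed $C$ on average, so the averaged clipping error is at most $p'G$, yielding the bias bound $|\langle g_t, \mathbb{E}[\bar{g}_t]-g_t\rangle| \leq \|g_t\|\,p'G$ that appears in the theorem. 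I expect this is the main obstacle: it is the only place where the dynamic, percentile-driven choice of $C$ enters the convergence argument, and it requires that the realized fraction of clipped gradients genuinely matches the target percentile --- which is exactly why the statement explicitly instructs us to disregard the histogram estimation error (so that the percentile-to-clipping-fraction correspondence is exact) and to set $\sigma_H=\sigma$ (so no privacy budget is diverted to the histogram).

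Next I would handle the second-order term via $\mathbb{E}[\|\tilde{g}_t\|^2] = \mathbb{E}[\|\bar{g}_t\|^2] + \frac{\sigma^2 C_t^2 d}{B^2}$, splitting off the clipped-gradient energy from the additive-noise variance. Substituting the bias and variance bounds into the descent inequality, rearranging to isolate $\eta\|\nabla\mathcal{L}(\theta_t)\|^2$, dividing by $\eta$, summing over $t=1,\dots,T$ so the loss differences telescope to $\mathbb{E}[\mathcal{L}(\theta_1)]-\mathbb{E}[\mathcal{L}(\theta_{T+1})]$, and dividing by $T$ yields precisely the first displayed inequality of \Cref{convergence} (the factor $\tfrac12$ being absorbed into the constants).

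Finally, to reach the asymptotic form I would bound $\|\bar{g}_t\|^2$ and $\|g_t\|$ by the constants $C$ and $G$ so the second term is $\mathcal{O}(\eta)$ and the bias term is $p'G^2$; the privacy calibration $\sigma = c_2\,q\sqrt{T\ln(1/\delta)}/\epsilon$ (inherited, via the reduction to equivalent \dpsgd in \Cref{theorem:allocation} and the standard moments/RDP accounting) turns the variance term into $\mathcal{O}\!\left(\frac{T\eta d\ln(1/\delta)}{N^2\epsilon^2}\right)$. Substituting the schedule $T\eta = \frac{N\epsilon}{\sqrt{d\ln(1/\delta)}}$ and $\eta = \frac{\sqrt{d\ln(1/\delta)}}{N\epsilon}$ then collapses the optimization, stochastic, and noise terms each to $\mathcal{O}\!\left(\frac{\sqrt{d\ln(1/\delta)}}{N\epsilon}\right)$, leaving the irreducible clipping bias, which gives the stated final bound.
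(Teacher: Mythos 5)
Your proposal is correct and follows essentially the same route as the paper's proof: the $L$-smooth descent lemma, the exact computation of $\mathbb{E}_t[\bar{g}_t]$ followed by the Cauchy--Schwarz bias bound $\|g_t\|\,p'G$ via the percentile property, the $\sigma^2 C_t^2 d/B^2$ noise-variance term, telescoping, and the moments-accountant calibration of $\sigma$. The only cosmetic difference is that you expand $\mathbb{E}[\|\tilde{g}_t\|^2]$ exactly while the paper bounds it with $\|a+b\|^2\leq 2\|a\|^2+2\|b\|^2$, which only changes constants.
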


The convergence analysis of \dcsgde is similar, with a minor modification as outlined in \Cref{mainDCSGDP}:
\begin{footnotesize}
\begin{align}
\label{mainDCE}
% \begin{split}
&\frac{1}{T}\mathbb{E}[\sum_{t=1}^T||\nabla\mathcal{L}(\theta_t)||^2] \nonumber\\
&\leq \frac{1}{T\eta}(\mathbb{E}[\mathcal{L}(\theta_{1})]-\mathbb{E}[\mathcal{L}(\theta_{T+1})])+\frac{L\eta}{T}\sum_{t=1}^T\mathbb{E}[||\bar{g_t}||^2] \nonumber\\
&+\frac{1}{T}\sum_{t=1}^T(L\eta\underbrace{\frac{\sigma^2C_t^2d}{B^2}}_{Variance}+\underbrace{\frac{1}{N}\sum_{i=1}^N G\max(||g_{t,i}||-C_t,0)}_{Bias}) \nonumber\\
&\leq \mathcal{O}(\frac{1}{T\eta})+\mathcal{O}(\eta)+\mathcal{O}(\frac{T\eta d \ln(1/\delta)}{N^2\epsilon^2}))+\underbrace{G^2}_{bias}
% \end{split}
\end{align}
\end{footnotesize}

We defer the detailed proof of the above analysis in the \textbf{Supplemental Materia}. In the above convergence analysis, since \dcsgdp and \dcsgde only differ at the setting of $C_t$ during training, they have similar analyses and finally get the same result due to our analysis's loose upper bound of convergence. The result of in \Cref{fin} is consistent with existing work\cite{du2021dynamic,chen2020understanding}. The non-vanishing is a biased term due to clipping operation, and previous work has shown \dpsgd with clipping suffers a constant regret in the worst case \cite{song2020characterizing}. Except for the bias term, the result aligns with our expectations: (1) smaller $\epsilon$ and $\delta$ mean better privacy protection and stronger noise, making it harder for the model to converge; (2) larger $d$ means a larger model, which is harder to converge.

Though the final convergence result is the same, \dcsgdp and \dcsgde have different influences on the model convergence from different angles. \dcsgdp ensures the probability that a gradient being clipped keeps constant during training, which can make the model update more stable. \dcsgde optimizes the expected squared error of a single gradient, which is similar to the variance and bias term in convergence analysis in the formulation. We believe that \dcsgde can also better balance the variance and bias in convergence analysis. Therefore, \dcsgde is more direct and reasonable.

\section{Experimental Setup}
\label{sec:exp}
\subsection{Investigated Baselines}

We compare our proposed methods against three baseline approaches: vanilla \dpsgd, the method proposed by Andrew et al.~\cite{andrew2021differentially}, and the method proposed by Du et al.~\cite{du2021dynamic}. We apply their dynamic strategy for $C$ under the RDP framework to make a fair comparison. And our methods has no limit on the privacy accounting, one can also use other advanced privacy accounting techniques like \cite{gopi2021numerical,abadi2016deep}. The approach by Andrew et al.~\cite{andrew2021differentially} adopts setting the clipping threshold to a specific percentile of gradient norms. While their algorithm is designed for federated learning, we evaluate it in the context of centralized learning. This method shares similarities with our \dcsgdp approach as they both involve the introduction of the Percentile $p$. All of these methods work on different mainstream optimizers, so in this section, we abuse the notion.
To illustrate our methods' advantage, we consider the time and privacy cost of hyperparameter tuning. In this work, we use grid search to tune the hyperparameter;  we adopt LT~\cite{liu2019private} or RDP to account for the privacy cost of hyperparameter tuning.

\subsection{Models and Datasets}
\noindent \textbf{Datasets.} We use four benchmark datasets to measure the privacy-utility trade-off: CIFAR10~\cite{krizhevsky2009learning}, MNIST~ \cite{lecun-mnisthandwrittendigit-2010}, SVHN~\cite{netzer2011reading}, and QNLI~\cite{wang2018glue}. The first three are for computer vision tasks, and the last one is for natural language inference tasks.
We defer the detailed in \textbf{Supplemental Materia}.

\noindent \textbf{Models.} In our experiments, we use a small CNN with two convolution layers (28, 058 parameters), ResNet18~\cite{he2016deep} (about 12M parameters), and ResNet34~\cite{he2016deep} (about 22M parameters) for CV tasks. They are trained from scratch. Since the batch normalization in ResNet is unsupported under DP, we replaced it with group normalization. We use a pre-trained BERT-base~\cite{devlin2018bert} for the NLP task, which is provided in the huggingface transformers repository~\cite{wolf-etal-2020-transformers}. Since the model is pre-trained, we just fine-tune the last encoder, the Bert pooler, and the classifier. Therefore, we just need to train about 8M parameters. We construct all the above models with PyTorch~\cite{paszke2019pytorch} and follow the default setting of PyTorch.

% \vspace{-5pt}
% \vspace{-0.2cm}
\subsection{Implementations}
\label{expsetting}
Our experiments vary the privacy budget $\epsilon \in \{1,2,4,8\}$ for different scenarios. The $\delta$ is set to $\frac{1}{|D|}$, where $|D|$ is the size of training set in all situation. We use RDP to account for the total privacy cost.
By default, all the experiments use training batch size $B=256$, and we set epochs to 20 for ResNet18 and ResNet34, 10 for small CNN, and 5 for BERT-base. We treat the train set as the private dataset and the test set as the public set, and all the performance results we report are the accuracy of the test set, omitting the $\%$ symbol.

We use two types of optimizers in this section. The first is SGD with momentum, and we set the momentum to 0.9. This one needs to tune the learning rate $\eta$ carefully. The other one is Adam~\cite{kingma2014adam}, we use the default setting of optimizer parameters $(\eta=0.001,\beta_1=0.9,\beta_2=0.999,\epsilon=10^{-8})$ since the default setting works well in previous work \cite{mohapatra2022role}.

The rest is the unique parameter of different methods. For \dpsgd, we need to tune $C$. For \dcsgdp and \dcsgde, as the analysis in \Cref{sec:DCSGD}, by default, we set $C_0=1, \sigma_H=5, b=20$ and do not need to tune them, where $\sigma_H=5$ is due to in most experiments ($\sigma$ is less than 1.2) it has least influence on the gradient noise according to \Cref{theorem:allocation}. We set $R_0=1$ for \dcsgdp and $R_0=b$ for \dcsgde. \textbf{All these parameters do not need to be tuned, and they work among all scenarios, like the default setting of Adam optimizer.} Therefore, \dcsgde does not need to tune any parameter and \dcsgdp only need to tune $p$. For the method of Andrew et al.\cite{andrew2021differentially}. We follow their setting: $\sigma_b=B/20, C_0=1$, and tune percentile $p$ and learning rate of $C$: $\eta_C$. For the method of Du et al.~\cite{du2021dynamic}. We only apply their strategy to the adaptive clipping threshold. Therefore, we need to tune $C_0, \rho_C$. 
We defer the hyperparameter grid and the candidates of different hyperparameters in the \textbf{Supplemental Material}.

\begin{table}[!t]
\centering
\setlength{\tabcolsep}{0.4mm}
\caption{Time cost of hyperparameter tuning of different methods on different model architectures.}
\label{timecost}
\resizebox{0.47\textwidth}{!}{
\begin{tabular}{p{1.4cm}|ccccc}
\toprule
Model    & \dpsgd              & \dcsgdp            & \dcsgde & Andrew~\cite{andrew2021differentially}     & Du~\cite{du2021dynamic}          \\
\midrule
CNN      & 3m6s$\times$10   & 3m35s$\times$9  & 3m34s  & 3m10s$\times$45  & 3m8s$\times$90   \\
ResNet18 & 12m15s$\times$10 & 12m30s$\times$9 & 12m46s & 12m20s$\times$45 & 12m18s$\times$90 \\
ResNet34 & 23m54s$\times$10 & 25m10s$\times$9 & 26m12s & 24m53s$\times$45 & 23m50s$\times$90 \\
BERT     & 17m30s$\times$10                   &   20m44s$\times$9               & 20m57s       &   19m58s$\times$45                 & 17m32s$\times$90 \\
\bottomrule
\end{tabular}
}
\end{table}

\begin{table}[t]
\centering
\caption{The accuracy comparison of different methods on Adam optimizer using LT~\cite{liu2019private} to account for the privacy cost of hyperparameter tuning.}
\label{LTresult}
\resizebox{0.47\textwidth}{!}{
\begin{tabular}{c|c|c|ccccc}
\toprule
Models                        & Dataset                      & $\epsilon$ & \dpsgd & \dcsgdp & \dcsgde & Andrew~\cite{andrew2021differentially} & Du~\cite{du2021dynamic}    \\ \midrule
\multirow{3}{*}{CNN}      & \multirow{3}{*}{MNIST}   & 2          & 92.39 &  92.06      & \textbf{94.18}  & 92.12  & 91.46 \\  
                          &                          & 4          & 93.84 &    93.75    & \textbf{94.70}  & 93.54  & 93.65 \\  
                          &                          & 8          & \textbf{95.29} &   94.54     &   94.95     &  94.99 & 94.86 \\ \midrule
\multirow{3}{*}{BERT}     & \multirow{3}{*}{QNLI}    & 2          & 68.68     &   67.60     & \textbf{74.08}  & 67.12  & 67.49 \\  
                          &                          & 4          & 70.29      &  69.60      & \textbf{75.56}  & 70.02  & 69.72 \\  
                          &                          & 8          &  72.27     &   71.21     & \textbf{76.42}  & 71.96  & 71.68 \\ \midrule
\multirow{6}{*}{ResNet18} & \multirow{3}{*}{CIFAR10} & 2          & 30.43 &  30.09     & \textbf{48.56}  & 29.31  & 29.43 \\  
                          &                          & 4          & 39.25 &   39.83     & \textbf{51.91}  & 38.87  & 40.08 \\  
                          &                          & 8          & 44.85 &   45.50     & \textbf{54.06}  & 45.51  & 45.19 \\ \cline{2-8} 
                          & \multirow{3}{*}{SVHN}    & 2          & 61.87 &   60.81     & \textbf{80.89}  &    57.21    & 53.72 \\  
                          &                          & 4          & 74.26 &   75.35     & \textbf{82.64}  & 74.96       & 75.56 \\  
                          &                          & 8          & 78.18 &   79.58     & \textbf{83.29}  &  79.31      & 78.45 \\ \midrule
\multirow{6}{*}{ResNet34} & \multirow{3}{*}{CIFAR10} & 2          & 24.38 &   24.12     & \textbf{43.92}  & 23.16  & 22.93 \\  
                          &                          & 4          & 33.97 &   34.18     & \textbf{47.10}  & 34.54  & 34.78 \\  
                          &                          & 8          & 39.75 &   39.68     & \textbf{50.01}  & 40.71  & 40.99 \\ \cline{2-8} 
                          & \multirow{3}{*}{SVHN}    & 2          & 49.96 &   49.01     & \textbf{79.19}  & 48.67  & 43.78 \\  
                          &                          & 4          & 70.87 &   71.29     & \textbf{81.54}  & 70.96  & 70.21 \\  
                          &                          & 8          & 74.88 &   76.22     & \textbf{82.61}  & 75.68  & 75.94 \\ \bottomrule
\end{tabular}
}
\end{table}

\section{Experiment Results and Analysis}
\subsection{Running Time}
We conduct experiments to illustrate the time cost of the training process, including hyperparameter tuning. We employ vanilla grid search for hyperparameter tuning, making the time cost equivalent to the product of the count of hyperparameter combinations and the time cost of a single training. We evaluate the time cost using the four mentioned model architectures on different training datasets: MNIST, CIFAR10, SVHN, and QNLI. The time cost of a single training is tested five times, and the average value, multiplied by the count of combinations, is considered the final result. In practical scenarios, adaptive optimizers such as Adam are commonly used to reduce time costs, as supported by previous work~\cite{mohapatra2022role}. We also use the Adam optimizer and do not tune the learning rate ($\eta$).

\Cref{timecost} presents the experimental results. Notably, both \dcsgdp and \dcsgde involve additional computations to adjust the hyperparameter $C$, resulting in slightly higher time costs per single training iteration compared to \dpsgd. A similar pattern is observed with the method proposed by Andrew et al., although its increase in time cost is comparatively smaller due to its simpler computational processes. In contrast, the approach introduced by Du et al. exhibits a time cost very close to \dpsgd, as they solely update the hyperparameter $C$ based on iteration numbers.

These methods exhibit comparable single-training time costs. The decisive factor lies in the count of potential hyperparameter combinations associated with hyperparameters that require adjustment. Both Andrew's method and Du's method introduce additional hyperparameters that necessitate tuning, leading to an exceptionally high time cost. In contrast, \dcsgde requires no hyperparameter tuning and achieves approximately a $9\times$ speedup compared to \dpsgd. While \dcsgdp requires tuning for the parameter $p$, and the quantities of $p$ and $C$ are similar, the time cost is essentially close to \dpsgd. However, $p$ is generally easier to tune than $C$ in practice, as discussed in Section~\ref{RQ3}. A similar scenario applies to the SGD optimizer, which requires tuning for the learning rate $\eta$; all five methods cannot circumvent this. So, the speedup results will be similar, and the actual time cost will increase due to the larger hyperparameter grid.

\begin{table}[!t]
\centering
\caption{The accuracy comparison of different methods on Adam optimizer using RDP to account for the privacy cost of hyperparameter tuning.}
\label{RDPresult}
\resizebox{0.47\textwidth}{!}{
\begin{tabular}{c|c|c|ccccc}
\toprule
Models                        & Dataset                      & $\epsilon$ & \dpsgd & \dcsgdp & \dcsgde & Andrew~\cite{andrew2021differentially} & Du~\cite{du2021dynamic}    \\ \midrule
\multirow{3}{*}{CNN}      & \multirow{3}{*}{MNIST}   & 2          & \textbf{94.24} &   94.18     & 94.18  & 92.37  & 91.11 \\  
                          &                          & 4          & \textbf{95.08} &   94.91     & 94.70  & 93.82  & 93.42 \\  
                          &                          & 8          & \textbf{95.99} &   95.78     & 94.95       &  94.67 & 94.06 \\ \midrule
\multirow{3}{*}{BERT}     & \multirow{3}{*}{QNLI}    & 2          & 72.76     &   72.67     & \textbf{74.08}  & 68.83  & 68.61 \\  
                          &                          & 4          & 74.28      &  73.13      & \textbf{75.56}  & 71.74  & 70.36 \\  
                          &                          & 8          &  75.03     &   75.16     & \textbf{76.42}  & 73.55  & 73.16 \\ \midrule
\multirow{6}{*}{ResNet18} & \multirow{3}{*}{CIFAR10} & 2          & 38.76 &   39.11     & \textbf{48.56}  & 31.47  & 28.44 \\  
                          &                          & 4          & 44.36 &   45.10     & \textbf{51.91}  & 38.79  & 35.94 \\  
                          &                          & 8          & 49.25 &   49.41     & \textbf{54.06}  & 45.30  & 41.79 \\ \cline{2-8} 
                          & \multirow{3}{*}{SVHN}    & 2          & 73.88 &  75.51      & \textbf{80.89}  &    65.42    & 54.91 \\  
                          &                          & 4          & 78.86 &   79.62     & \textbf{82.64}  & 74.75       & 69.65 \\  
                          &                          & 8          & 80.83 &   81.33     & \textbf{83.29}  &  79.03      & 76.05 \\ \midrule
\multirow{6}{*}{ResNet34} & \multirow{3}{*}{CIFAR10} & 2          & 33.30 &   33.75     & \textbf{43.92}  & 25.73  & 24.35 \\  
                          &                          & 4          & 40.51 &   40.65     & \textbf{47.10}  & 33.26  & 30.99 \\  
                          &                          & 8          & 43.57 &   44.38     & \textbf{50.01}  & 39.71  & 37.28 \\ \cline{2-8} 
                          & \multirow{3}{*}{SVHN}    & 2          & 69.76 &    71.89    & \textbf{79.19}  & 50.71  & 40.82 \\  
                          &                          & 4          & 75.95  &   77.09     & \textbf{81.54}  & 70.66  & 62.28 \\  
                          &                          & 8          & 78.01 &   79.75    & \textbf{82.61}  & 75.66  & 72.59 \\ \bottomrule
\end{tabular}
}

\end{table}

\begin{table*}[t]
	\centering
        \footnotesize
	\caption{The accuracy comparison of different methods for Adam optimizer without the privacy cost of tuning.}
        \label{AccComparsionAdam}
        \setlength{\tabcolsep}{4.2mm}{
        \resizebox{0.87\textwidth}{!}{
	\begin{tabular}{c|c|c|ccccc}
        \toprule
		Models&Dataset&$\epsilon$&\dpsgd&\dcsgdp&\dcsgde&Andrew~\cite{andrew2021differentially} &Du~\cite{du2021dynamic}\\
            \midrule
            \multirow{3}{*}{CNN} & \multirow{3}{*}{MNIST}&$1$&$\mathbf{94.78\pm0.21}$&$94.72\pm0.19$&$94.04\pm0.30$&$94.69\pm0.20$&$94.72\pm0.22$\\
            &&$2$&$95.62\pm0.15$&$95.08\pm0.27$&$94.19\pm0.87$&$95.45\pm0.29$&$\mathbf{95.63\pm0.24}$\\
            &&$4$&$95.95\pm0.11$&$95.60\pm0.20$&$94.65\pm0.16$&$\mathbf{96.16\pm0.15}$&$95.94\pm0.15$\\
            \midrule
            \multirow{3}{*}{BERT-base} & \multirow{3}{*}{QNLI}&$2$&$73.21\pm0.48$&$73.21\pm0.36$&$\mathbf{74.31\pm0.38}$&$73.40\pm0.49$&$74.24\pm0.36$\\
            &&$4$&$75.36\pm0.22$&$74.35\pm0.25$&$75.61\pm0.21$&$74.29\pm0.58$&$\mathbf{75.87\pm0.29}$\\
            &&$8$&$76.20\pm0.53$&$75.65\pm0.15$&$\mathbf{76.29\pm0.39}$&$75.49\pm0.15$&$76.29\pm0.54$\\
            
            \midrule
            \multirow{6}{*}{ResNet18} & \multirow{3}{*}{CIFAR10}&$2$&$48.34\pm0.45$&$48.63\pm0.42$&$48.89\pm0.48$&$49.21\pm0.23$&$\mathbf{49.48\pm0.81}$\\
            &&$4$&$51.28\pm0.16$&$51.46\pm0.15$&$51.42\pm0.38$&$51.49\pm0.26$&$\mathbf{52.28\pm0.64}$\\
            &&$8$&$53.81\pm0.46$&$54.18\pm0.32$&$53.76\pm0.27$&$54.30\pm0.78$&$\mathbf{54.42\pm0.32}$\\
            
            \cline{2-8}
            
            &\multirow{3}{*}{SVHN}&$2$&$80.05\pm0.21$&$80.42\pm0.28$&$\mathbf{81.39 \pm 0.46}$&$80.96\pm0.08$&$81.07\pm0.29$\\
            &&$4$&$81.03\pm0.46$&$81.43\pm0.22$&$\mathbf{82.45\pm0.36}$&$81.50\pm0.29$&$82.15\pm0.22$\\
            &&$8$&$81.92\pm0.46$&$83.06\pm0.13$&$\mathbf{83.44\pm0.40}$&$83.20\pm0.23$&$83.07\pm0.26$\\

            \midrule
            \multirow{6}{*}{ResNet34}
            &\multirow{3}{*}{CIFAR10}&$2$&$43.72\pm0.40$&$43.66\pm0.38$&$44.07\pm0.59$&$44.15\pm0.28$&$\mathbf{44.52\pm0.60}$\\
            &&$4$&$46.89 \pm 0.59$&$47.34\pm0.26$&$47.30\pm0.44$&$\mathbf{48.25\pm0.42}$&$47.75\pm0.46$\\
            &&$8$&$49.79\pm 0.36$&$49.28\pm0.44$&$50.10\pm0.28$&$50.13\pm0.24$&$\mathbf{50.26\pm0.53}$\\
            
            \cline{2-8}
            
            &\multirow{3}{*}{SVHN}&$2$&$78.47\pm0.33$&$78.90\pm0.49$&$\mathbf{79.26 \pm 0.28}$&$79.24\pm0.31$&$79.06\pm0.40$\\
            &&$4$&$79.70\pm0.47$&$80.08\pm0.18$&$\mathbf{81.67\pm0.34}$&$80.53\pm0.33$&$80.37\pm0.28$\\
            &&$8$&$80.66\pm0.30$&$81.81\pm0.39$&$\mathbf{82.47\pm0.46}$&$82.12\pm0.18$&$81.60\pm0.30$\\
        \bottomrule
	\end{tabular}
	}
 }
\end{table*}

We emphasize that the analysis above specifically applies to our experimental setup, given that the number of candidate hyperparameters, a user decision greatly influences the time cost. Nevertheless, our method consistently reduces the number of hyperparameters requiring tuning, providing a clear advantage in terms of hyperparameter tuning time costs.

\subsection{Privacy-Performance Trade-off}
In this subsection, we analyze the utility of our methods, assessing the performance of the final outputted DP model from two perspectives: one considering the privacy cost of hyperparameter tuning and the other not.

\noindent \textbf{Considering the Privacy Cost of Tuning.}
Previous studies have demonstrated that hyperparameter tuning introduces additional privacy costs~\cite{abadi2016deep,liu2019private,papernot2021hyperparameter}. In this subsection, we employ two methods to quantify the overall privacy cost. The first method involves using the LT algorithm~\cite{liu2019private}, and the second method employs grid search, treating hyperparameter tuning as a sequential composition and applying Renyi Differential Privacy (RDP). According to previous research~\cite{mohapatra2022role}, these two methods are suitable for different scenarios, with LT performing better in cases involving a large number of training executions, while RDP performs better in the opposite scenario.
Our experiment compares the performance of the best model obtained under the same predefined total privacy budget. For MNIST, where $\epsilon=1$ is deemed too small, we use $\epsilon=8$ instead. Additionally, due to the potentially large values of $\sigma$ in \dcsgdp, we adjust the $\sigma_H$ accordingly (setting it to 8 for $2\leq \sigma \leq 3$, and 12 for $\sigma > 3$) to ensure minimal impact on $\sigma$, thus incurring no hyperparameter tuning cost.

For the privacy analysis of LT in \Cref{theorem:liuhyper}, following~\cite{mohapatra2022role}, we set an extremely small $\delta_2$ ($10^{-20}$) and $\gamma=1/2G$, where $G$ represents the count of possible hyperparameter combinations. This choice implies that the expected running time of LT is $2G$, ensuring that we can practically explore almost all hyperparameter combinations. Under these conditions, we can derive the privacy cost for a single training. In the case of RDP, where the grid search involves repeatedly training $G$ models, the privacy cost for a single training can be obtained through $G$-composition.

\begin{table*}[t]
	\centering
        % \small
        \footnotesize
        \vspace{1mm}
        \setlength{\textfloatsep}{-10pt}
\setlength{\tabcolsep}{4mm}{
	\caption{The accuracy comparison of different methods for SGD optimizer without the privacy cost of tuning.}
        \label{AccComparsion}
        \resizebox{0.87\textwidth}{!}{
	\begin{tabular}{c|c|c|ccccc}
        \toprule
		Models&Dataset&$\epsilon$&\dpsgd&\dcsgdp&\dcsgde&Andrew~\cite{andrew2021differentially}&Du~\cite{du2021dynamic}\\
            \midrule
            \multirow{3}{*}{CNN} & \multirow{3}{*}{MNIST}&$1$&$94.51\pm 0.19$&$93.98\pm 0.25 $&$94.06 \pm 0.35$&$94.16\pm0.17$&$\mathbf{95.23\pm0.30}$\\
            &&$2$&$95.86 \pm 0.08$&$94.86\pm0.15$&$94.34\pm 0.14$&$95.53\pm0.24$&$\mathbf{96.10\pm 0.41}$\\
            &&$4$&$96.25\pm 0.20$&$95.89\pm0.18$&$94.64\pm 0.14$&$96.34\pm0.07$&${96.56\pm0.21}$\\
            \midrule
            \multirow{3}{*}{BERT-base} & \multirow{3}{*}{QNLI}&$2$&$73.48\pm0.22$&$ 73.13 \pm 0.40 $&$74.17\pm 0.84$&$73.68\pm 0.80$&$\mathbf{74.20\pm0.48}$\\
            &&$4$&$75.29\pm0.69$&$74.94 \pm 0.59$&$75.33 \pm 0.89$&$74.39\pm0.88$&$\mathbf{75.58\pm0.74}$\\
            &&$8$&$76.70 \pm 0.73$&$75.81 \pm 0.54$&$\mathbf{76.74 \pm 0.71}$&$76.12\pm0.48$&$76.65\pm0.82$\\
            
            \midrule
            \multirow{6}{*}{ResNet18} 
            &\multirow{3}{*}{CIFAR10}&$2$&$48.88\pm0.60$&$49.01 \pm 0.27$&$48.05\pm0.11$&$48.98\pm0.68$&$\mathbf{50.08 \pm 0.23}$\\
            &&$4$&$51.19\pm0.50$&$52.16\pm0.55$&$51.62\pm0.43$&$52.35\pm0.10$&$\mathbf{52.98 \pm 0.30}$\\
            &&$8$&$54.13\pm0.27$&$53.81\pm0.33$&$53.91\pm 0.25$&$53.83\pm 0.21$&$\mathbf{54.74 \pm 0.29}$\\
            
            \cline{2-8}
            
            &\multirow{3}{*}{SVHN}&$2$&$80.47\pm0.41$&$80.74\pm0.15$&$\mathbf{81.76\pm0.24}$&$80.69\pm0.32$&$81.48\pm 0.29$\\
            &&$4$&$81.80\pm0.13$&$81.80\pm0.16$&$\mathbf{82.77\pm0.30}$&$81.96\pm0.15$&$82.52\pm 0.06$\\
            &&$8$&$82.48\pm0.27$&$82.23\pm0.10$&$\mathbf{83.45\pm0.22}$&$82.76\pm0.06$&$82.90\pm 0.17$\\

            \midrule
            \multirow{6}{*}{ResNet34} 
            &\multirow{3}{*}{CIFAR10}&$2$&$44.24\pm0.35$&$45.41\pm0.76$&$44.00\pm0.49$&$45.49\pm0.31$&$\mathbf{45.65\pm0.40}$\\
            &&$4$&$47.43\pm0.33$&$47.80\pm0.29$&$47.48\pm0.33$&$47.75\pm0.45$&$\mathbf{49.49\pm0.42}$\\
            &&$8$&$49.60\pm0.36$&$50.70\pm0.65$&$49.73\pm0.51$&$51.08\pm0.79$&$\mathbf{51.36\pm0.38}$\\
            
            \cline{2-8}
            
            &\multirow{3}{*}{SVHN}&$2$&$78.38\pm0.11$&$78.60\pm0.53$&$79.70\pm0.75$&$79.49\pm0.58$&$\mathbf{79.88\pm0.25}$\\
            &&$4$&$79.99\pm0.48$&$80.72\pm0.35$&$\mathbf{81.73\pm0.20}$&$80.91\pm0.25$&$81.40\pm0.27$\\
            &&$8$&$80.70\pm0.47$&$81.12\pm0.41$&$\mathbf{82.39\pm0.22}$&$81.59\pm0.21$&$82.05\pm0.41$\\
        \bottomrule
	\end{tabular}
	}}
 \vspace{-0.4cm}
\end{table*}

When considering the privacy cost of hyperparameter tuning, one will choose an optimizer that needs less hyperparameter to ensure the model performance. Therefore, we conduct experiments on the Adam optimizer as Mohapatra et al. \cite{mohapatra2022role} suggested \Cref{LTresult} and \Cref{RDPresult} show the result using LT and RDP separately. \dcsgde does not need to tune any hyperparameter, and it will only train the model once with the total privacy cost. Therefore, it has the same performance in both situations. 

In the results obtained using LT, \dcsgde demonstrates strong performance on CIFAR10 and SVHN, attributed to the complexity of these datasets and their sensitivity to noise levels. However, its improvement on the MNIST dataset is not as significant, as MNIST and CNN are relatively simple, and the noise level has a lesser impact on the final model performance. Moreover, a smaller $\epsilon$ leads to a more pronounced improvement for \dcsgde compared to other methods, as a small total privacy budget causes a more significant difference in the noise multiplier among different methods.
On the other hand, for the remaining methods, LT results in an inflation of $\epsilon$ by at least $3\times$. The other four methods exhibit similar performance, as LT is not highly sensitive to the count of hyperparameter combinations. Nevertheless, it's crucial to note that \dpsgd and \dcsgdp have significantly lower time costs compared to the methods of Andrew et al. and Du et al. In summary, with LT, \dcsgde performs well across various scenarios, especially on complex datasets with a small privacy budget, and \dcsgdp achieves comparable or superior performance compared to other methods.

In the results obtained using RDP, \dcsgde continues to exhibit the best performance, and the analysis for \dcsgde is consistent with previous observations. However, the differences in performance among the remaining methods become more pronounced as the total privacy cost is related to the count of hyperparameter combinations. Overall, the ranking in terms of performance is \dcsgdp$<$\dpsgd$<$Andrew$<$Du, aligning with the counts of hyperparameter combinations for each method.

In summary, regardless of the method used to measure the cost of hyperparameter tuning, both \dcsgdp and \dcsgde benefit from a smaller scale of hyperparameter tuning, leading to certain performance improvements.

\noindent \textbf{Without Considering the Privacy Cost of Tuning.}
\label{accNoHyper}
Most existing work ignores the privacy cost of hyperparameter tuning \cite{tramer2020differentially,wei2022dpis,andrew2021differentially,yu2021differentially}. Therefore, we also conduct experiments under this situation to better compare our method with other existing approaches. Since we do not consider the privacy cost, we apply both Adam and SGD optimizers in this subsection. Additionally, we can apply a small privacy budget for MNIST.

\Cref{AccComparsionAdam} and \Cref{AccComparsion} illustrate the experiment results separately. We test different hyperparameter combinations, identify the best one, repeat the training 5 times, and report the average accuracy. From the results, we draw the following analysis: (1) In general, both \dcsgdp and \dcsgde perform well with both optimizers, achieving performance close to or better than other existing algorithms. Moreover, our methods require fewer hyperparameter adjustments and are more time-efficient. (2) \dcsgdp shows performance close to the method of Andrew et al. This is because \dcsgdp and \dcsgde both adopt the concept of Percentile, but \dcsgdp tunes fewer hyperparameters. (3) The method of Du et al. can achieve the best performance under most settings, but it comes with the cost of tuning the most hyperparameters. (4) \dcsgde does not perform as well on MNIST; we attribute this to MNIST and CNN being simple enough to tolerate stronger noise, rendering the variance term in the expectation-squared error less sensitive to the impact of noise.

Under the situation of not considering the privacy cost of tuning, one can still choose \dcsgde and \dcsgdp, as they achieve comparable or better performance compared to other methods with less tuning time cost.

\begin{figure}[tp]
    \centering
    \subfloat{
        \label{diffp:legend}\includegraphics[width=0.98\linewidth]{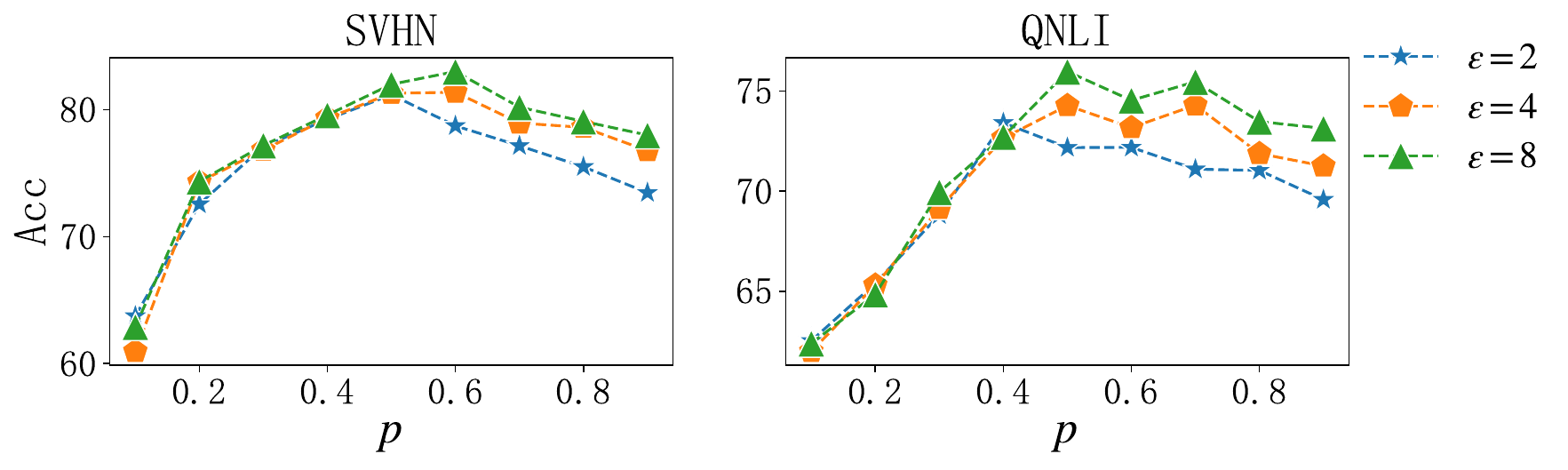}
    }
    \vspace{-0.2cm}
    \caption{The accuracy of different $p$ for \dcsgdp on SVHN and QNLI.}
    \vspace{-0.5cm}
    \label{fig:diffp}
\end{figure}

\begin{figure}[tp]
    \centering
    \subfloat{        \label{diffb:svhn}\includegraphics[width=\linewidth]{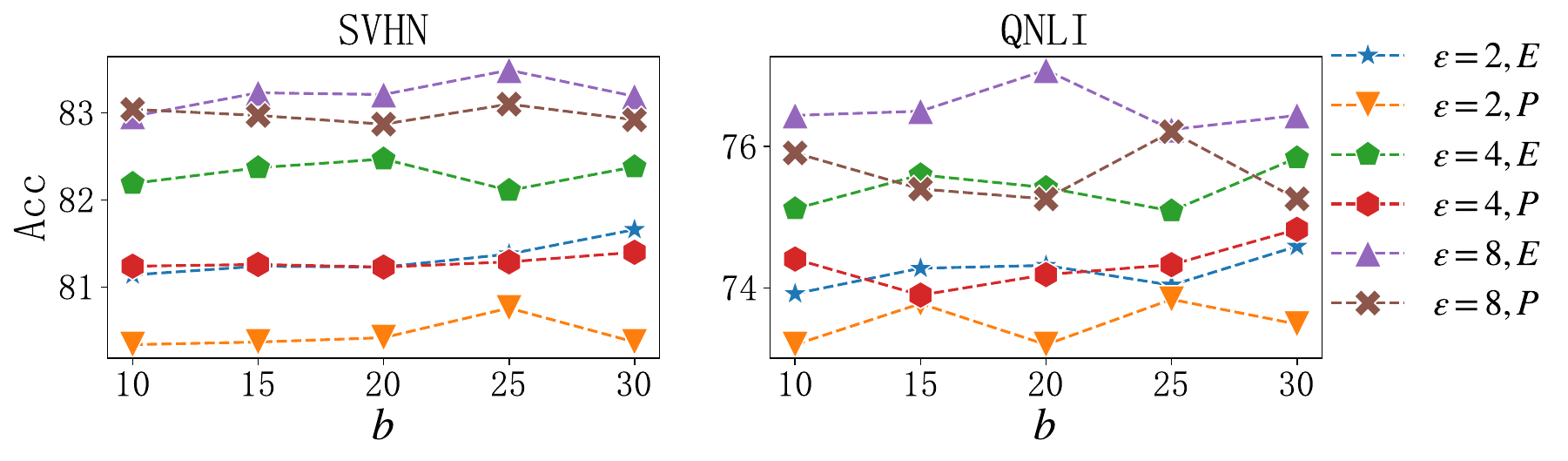}
    }
    \vspace{-0.1cm}
    \caption{The accuracy of $b$ for \dcsgdp (P) and \dcsgde (E) on SVHN and QNLI.}
    \label{fig:diffb}
\end{figure}

\begin{figure}[tp]
    \centering\subfloat{\label{diffsigma:svhn}\includegraphics[width=\linewidth]{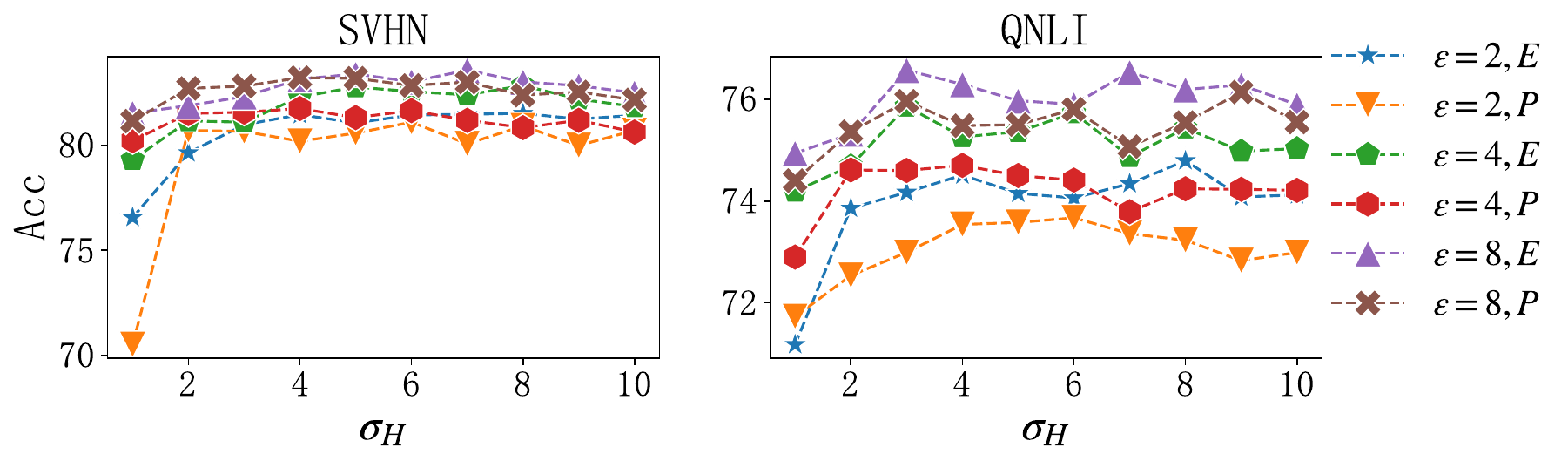}
    }
    \vspace{-0.1cm}
    \caption{The accuracy of different $\sigma_H$ for \dcsgdp (P) and \dcsgde (E) on SVHN and QNLI.}
    \label{fig:diffSigma}
    \vspace{-0.3cm}
\end{figure}

\subsection{The Influence of Parameters}
\label{RQ3}

To analyze the influence of $p, b, \sigma_H$ in \dcsgdp and \dcsgde separately and illustrate that we do not need to tune $b, \sigma_H$, we conduct experiments on SVHN (ResNet18) and QNLI (BERT-base) with the Adam optimizer. More experiments on other datasets can be found in \textbf{Supplemental Materia}. We use the best $p$ of different scenarios from \Cref{accNoHyper} as the default value for \dcsgdp in this subsection. When evaluating the impact of one parameter, the other parameters are fixed at their default values.

\Cref{fig:diffp} illustrates the influence of percentile $p$ on accuracy. $p$ significantly impacts the final model performance and needs to be chosen carefully. The best $p$ differs when $\epsilon$ and the dataset change, but the best $p$ is close among different privacy budgets. It seems that the more complex the task is, the smaller $p$ we should choose. Besides, the best hyperparameter combination from \Cref{accNoHyper} shows that the best $p$ remains close for a dataset on different models (e.g., SVHN on ResNet18 and ResNet34). All the above indicates that the choice of $p$ follows a pattern.

\Cref{fig:diffb} shows the accuracy of different $b$.The choice of $b$ has less impact on the model accuracy, considering the randomness during training. Therefore, we do not need to tune $b$. \Cref{fig:diffSigma} shows the variation in accuracy under different $\sigma_H$. We can find that the model performance is not sensitive to $\sigma_H$ except for the situation where $\sigma_H$ is small and it incurs a significant increase in the noise multiplier of the gradient, e.g., $\sigma_H=1, \epsilon=2$ in this subsection. Therefore, one only needs to choose $\sigma_H$ by ensuring that it won't cause a large increase in $\sigma_T$ and it does not incur extra tuning costs.

\section{Related Work}
\label{sec:rw}

\noindent \textbf{DP-SGD and Variants.}
Abadi et al.~\cite{abadi2016deep} introduced DP-SGD, a foundational method for training deep learning models with DP. They suggested using the median of unclipped gradient norms as the clipping threshold but lacked a method with formal privacy guarantees. For privacy analysis, they used Moments Accountant (MA), later improved by Mironov\cite{mironov2019r} with Rényi Differential Privacy (RDP). Efforts to enhance DP-SGD include Papernot et al.~\cite{papernot2021tempered}, who replaced ReLU with tempered sigmoids for better performance, and Yu et al.~\cite{yu2021large, yu2021not}, who reduced gradient dimensionality to lower noise in large models. Yu et al.~\cite{yu2019differentially} also introduced a dynamic privacy budget allocator under Concentrated Differential Privacy (CDP).
Xiao et al.~\cite{xiao2022differentially} proposed ModelMix, using random aggregation of intermediate model states for updates. Liu et al.~\cite{decompositionDPSGD} decomposed gradients into orthogonal and parallel components, focusing the privacy budget on the orthogonal part. Sha et al.~\cite{heavytails} optimized DP-SGD under a heavy-tailed distribution, applying different clipping thresholds for the heavy tail and light body of gradients.
Despite these improvements, all methods still rely on clipping to bound gradient sensitivity.

\noindent \textbf{Adaptive Clipping.}
Recent studies have tackled the challenges associated with the clipping threshold by focusing on its adaptation. Pichapati et al.~\cite{pichapati2019adaclip} introduced a coordinate-wise adaptive clipping method to reduce gradient noise. Andrew et al.~\cite{andrew2021differentially} applied percentile-based updates in federated learning with DP-FedAvg, using a geometric update scheme and a hyperparameter to control update speed. Golatkar et al.~\cite{golatkar2022mixed} used a public dataset for adaptive adjustment of $C$ through percentiles, avoiding additional privacy costs, though finding a suitable public dataset remains challenging. Du et al.~\cite{du2021dynamic} dynamically adjusted both the clipping threshold and noise power based on iteration number under Gaussian Differential Privacy. However, their method requires an initial $C$ tailored to specific settings due to the lack of norm distribution data and involves hyperparameters that require tuning. In contrast, our DC-SGD method directly accesses the norm distribution under DP to determine the optimal $C$, eliminating the need for a public dataset and reducing dependency on hyperparameters.

\noindent \textbf{Normalization-Based Clipping.}
The second approach eliminates the need for a specific clipping threshold. De et al.~\cite{de2022unlocking} proposed normalizing the gradient using $C$, effectively incorporating it into the learning rate $\eta$. They demonstrated that using a small $C$ (e.g., $C=1$) consistently yields strong results, allowing the focus to shift to tuning 
$\eta$. Similarly, Bu et al.~\cite{bu2022automatic} and Yang et al.~\cite{yang2022normalized} introduced normalization-based clipping functions that combine $\eta$ and $C$, thereby eliminating the need for a separate clipping threshold. These functions normalize the influence of each sample to $1$, ensuring bounded sensitivity.
While these methods alter the traditional clipping strategy within DP-SGD, our work retains the standard approach, focusing on dynamically adjusting $C$. Although these approaches are orthogonal to ours, we conducted comparative experiments with AutoClip, proposed by Bu et al.~\cite{bu2022automatic}, with results detailed in the Supplemental Material.

Amin et al.\cite{amin2019bounding}, Chen et al.\cite{chen2020understanding}, and Zhang et al.\cite{zhang2022understanding} explored the impact of clipping on convergence. Liu et al.\cite{liu2019private}, Mohapatra et al.\cite{mohapatra2022role}, and Papernot et al.\cite{papernot2021hyperparameter} underscored the privacy costs of hyperparameter tuning and proposed mitigation strategies.
Our work focuses on minimizing the time and privacy costs of hyperparameter tuning. DC-SGD reduces privacy costs during training while requiring fewer hyperparameter combinations while maintaining strong model performance. Recently, Liu et al. \cite{liu2024differentially} and Zhang et al. \cite{zhang2024dpzero} combined DP-SGD and Zeroth-Order Optimization to tackle the challenges of fine-tuning LLM with DP: high time and memory cost and bad model utility. And their works still need a clipping threshold to clip the gradient. We think we can combine their works with our methods since hyperparameter tuning is very expensive in the LLM scenario. We leave this as our future research direction.

% \vspace{-0.2cm}
\section{Conclusion}
\label{conclusion}

This paper introduces DC-SGD, a novel differentially private training solution with an adaptive dynamic clipping threshold. Leveraging the gradient norm distribution, we propose two mechanisms for dynamically setting the clipping threshold: DC-SGD-P uses a percentile 
$p$, while DC-SGD-E optimizes the expected squared error. We formally establish privacy and convergence guarantees for DC-SGD and validate its effectiveness through extensive experiments on benchmark datasets across various deep models. The results demonstrate that DC-SGD outperforms or matches DP-SGD while reducing hyperparameter tuning costs. Additionally, DC-SGD integrates seamlessly with both SGD and Adam optimizers.

\section{Acknowledgment}
We sincerely thank all the anonymous reviewers for their valuable feedback and constructive suggestions.
This research received partial support from the National Natural Science Foundation of China under No.62302441. This work was partly supported by Key Research and Development Plans of Guangxi Province (Granted No.AB22080077).
The research was also supported by the Information Technology Center of Zhejiang University and the Supercomputing Center of Hangzhou City University. Wenzhi Chen is the corresponding author.

% \bibliographystyle{plain}
% \bibliography{bib}

% \clearpage
% \bibliographystyle{ACM-Reference-Format}
% \bibliography{bib}
% \begin{IEEEbiographynophoto}{John Doe}
% \end{IEEEbiographynophoto}
\bibliographystyle{IEEEtran}
\bibliography{bib}

\vspace{3mm}
\begin{center}
    \textbf{\Large{Supplemental Material}}
\end{center}

\appendices

\setcounter{section}{0}
\renewcommand\thesection{\Alph{section}}

\maketitle

\section{The change of the average norm}

\begin{figure}[htp]
    \centering
    \includegraphics[width=0.8\linewidth]{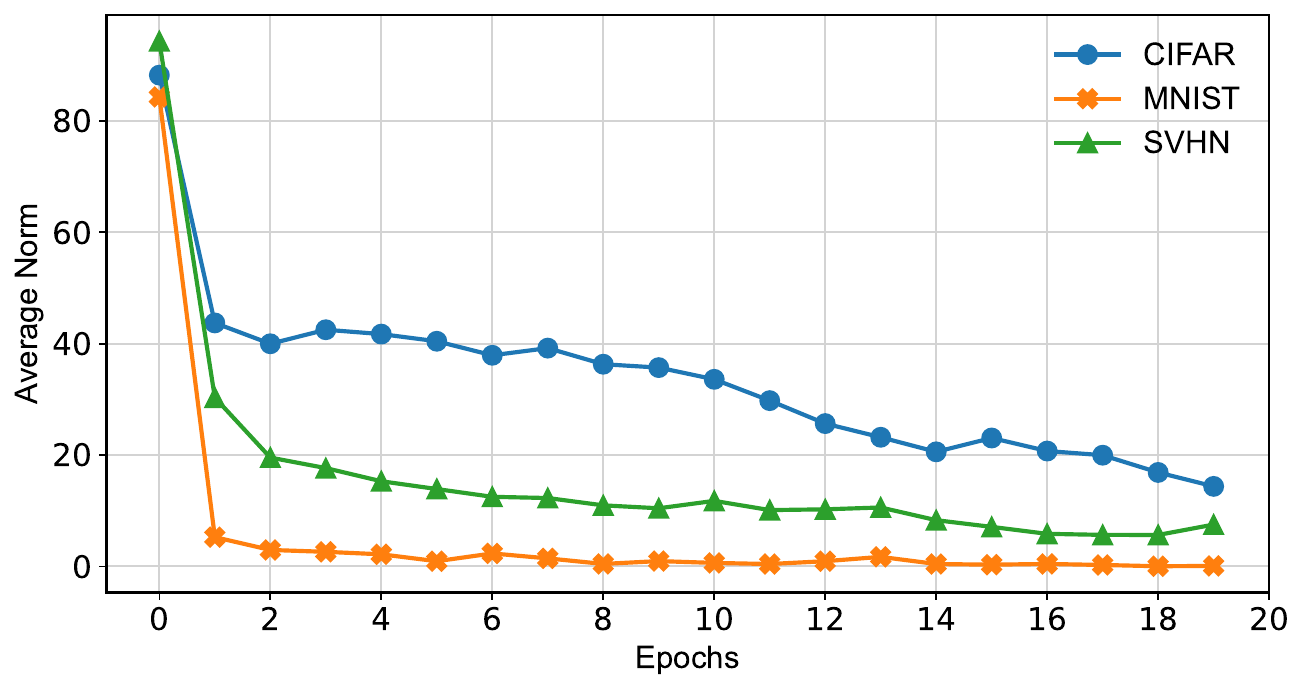}
    \caption{The change of the average norm during the training process using the standard SGD algorithm. All utilize the ResNet18 model for training. Each point is computed from $2000$ randomly selected samples. 
    }
    \label{fig:norm_change}
\end{figure}

Figure~\ref{fig:norm_change} shows the change of the average norm during the training process using the standard SGD algorithm on the ResNet18 model. Each point is computed from $2000$ randomly selected samples. 
We can see that gradients typically diminish over time in standard SGD, causing a previously ideal high $C$ to result in minimal clipping and excessive noise~\cite{du2021dynamic,wei2022dpis}.
A static $C$ may initially seem adequate but can become suboptimal as training progresses.

\section{Convergence Analysis of \dcsgdp}

\label{proofP}
To analyze the convergence of \dcsgdp with the SGD optimizer, we borrow the proof idea in \cite{du2021dynamic}. First, we need the assumptions as follows:

\begin{assumption}[$L$-Smooth]
\label{asumsmooth}
Let $\mathcal{L}(\theta)$ represent the objective function, $g_t$ denotes the gradient of $\mathcal{L}(\theta_t)$. Then $\forall \theta_t, \theta_{t+1}$, there exists an non-negative constant $L$ such that:

\begin{equation}
    \mathcal{L}(\theta_{t+1})-\mathcal{L}(\theta_{t})\leq \langle \nabla\mathcal{L}(\theta_t),(\theta_{t+1}-\theta_t)\rangle+\frac{L}{2}||\theta_{t+1}-\theta_t||^2
\end{equation}

\end{assumption}

\begin{assumption}[Bound of Gradient]
\label{boundofg}
Let $\mathcal{L}(\theta)$ represent the objective function, $g_{t,i}$ denotes the gradient of $\mathcal{L}(\theta)$ on input $x_i$. Then $\forall i,\theta_t$, $||g_{t,i}||\leq G$.
\end{assumption} 
This assumption is common used in previous work\cite{du2021dynamic,chen2020understanding}.

\begin{assumption}[Lower Bound of $\mathcal{L}$]
\label{boundofl}
For all $\theta_t$ and and some constant $\mathcal{L}_*$, we have $\mathcal{L}(\theta_t)>\mathcal{L}_*$.
\end{assumption}

To analyze the relation of convergence and privacy budget, we use the privacy analysis of \dpsgd with MA~\cite{abadi2016deep} as follows:

\begin{theorem}[Privacy Analysis of \dpsgd using MA~\cite{abadi2016deep}]
\label{abadiMA}
There are constants $c_1$ and $c_2$ such that, given a sampling probability $q=B/N$ and the number of steps $T$, \dpsgd achieves $(\epsilon,\delta)$-differential privacy for any $\delta > 0$. This holds true for any $\epsilon < c_1q^2T$, provided that
\begin{equation}\small
    \sigma\geq c_2\frac{q\sqrt{T\ln(1/\delta)}}{\epsilon}
\end{equation}
\end{theorem}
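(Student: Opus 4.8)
The plan is to establish the statement through the R\'enyi DP composition machinery already developed in the paper, exploiting the fact that the Moments Accountant bound of Abadi et al.\ is, up to reparametrization, a R\'enyi divergence composition argument. First I would note that a single iteration of \dpsgd is precisely a Subsampled Gaussian Mechanism with sampling rate $q=B/N$ and noise multiplier $\sigma$, so \Cref{SGMrdp} applies directly: each step satisfies $(\alpha,\rho_1)$-RDP with $\rho_1 \le D_\alpha\big((1-q)\mathcal{N}(0,\sigma^2)+q\mathcal{N}(1,\sigma^2) \,\|\, \mathcal{N}(0,\sigma^2)\big)$ for every order $\alpha>1$.

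The second and most delicate step is to turn this exact divergence into a clean closed-form upper bound. In the regime of interest (small $q$, and order $\alpha$ not too large relative to $\sigma^2$) I would expand the mixture density binomially and bound the resulting moments term by term, establishing an estimate of the shape $\rho_1 \le c\,q^2\alpha/\sigma^2$ that is valid provided $\alpha \lesssim \sigma^2\ln\!\big(1/(q\sigma)\big)$. This is the analytic heart of the argument and the origin of the absolute constants $c_1,c_2$; the care lies entirely in controlling the higher-order terms of the expansion so that the leading $q^2\alpha$ contribution dominates.

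Next I would compose across the $T$ iterations. Because R\'enyi divergence is additive under composition, the whole training run satisfies $(\alpha,\rho)$-RDP with $\rho \le T\rho_1 \le c\,Tq^2\alpha/\sigma^2$. I would then convert back to $(\epsilon,\delta)$-DP in the spirit of \Cref{convert}, writing $\epsilon \le \rho + \tfrac{\ln(1/\delta)}{\alpha-1} \le c\,Tq^2\alpha/\sigma^2 + \tfrac{\ln(1/\delta)}{\alpha-1}$ for every admissible order $\alpha$ and every $\delta\in(0,1)$.

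Finally I would optimize over the free order $\alpha$. Balancing the two terms with $\alpha$ of order $\sigma\sqrt{\ln(1/\delta)}\,/\,(q\sqrt{cT})$ gives $\epsilon \le c'\,q\sqrt{T\ln(1/\delta)}/\sigma$, and solving this inequality for $\sigma$ yields exactly $\sigma \ge c_2\,q\sqrt{T\ln(1/\delta)}/\epsilon$ for a suitable $c_2$. The requirement that this optimal $\alpha$ remain inside the validity window $\alpha \lesssim \sigma^2$ of the per-step bound is precisely what forces the hypothesis $\epsilon < c_1 q^2 T$. I expect the main obstacle to be the second step, the tight moment/tail bound on the subsampled Gaussian divergence; once that estimate is secured, the composition and the final optimization over $\alpha$ are routine.
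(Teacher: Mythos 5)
The paper does not actually prove this theorem; it is imported verbatim from Abadi et al.~\cite{abadi2016deep}, so there is no in-paper proof to compare against. Your sketch is a faithful reconstruction of the standard argument behind that result: a per-step moment/R\'enyi bound of order $q^2\alpha/\sigma^2$ for the subsampled Gaussian (valid for $\alpha \lesssim \sigma^2\ln(1/(q\sigma))$), additive composition over $T$ steps, conversion to $(\epsilon,\delta)$-DP, and optimization over the order $\alpha$ --- and you correctly locate the origin of both constants, including that the validity window for the optimal $\alpha$ is exactly what forces the hypothesis $\epsilon < c_1 q^2 T$. The only piece left unproven is the per-step expansion bound itself, which you rightly identify as the analytic core; as a plan this is sound and coincides with the canonical proof in the cited source.
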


In this section, we use $\mathcal{L}(\theta)$ to represent the loss function, use $g_t=\frac{1}{N}\sum_{i=0}^{N}g_{t,i}=\nabla\mathcal{L}(\theta_t)$ to represent the actual gradient of $\mathcal{L}(\theta_t)$, where $N$ is the size of the dataset and $g_{t,i}$ is gradient of data $x_i$ at itration t. And we use $\bar{g_t}=\frac{1}{B}\sum_{i\in\mathcal{B}_t}{\rm Clip}(g_{t,i},C_t)$ to represent the estimated gradient after clipping, where $B$ is the batch size. And use $\tilde{g_t}=\frac{1}{B}\sum_{i\in\mathcal{B}_t}{\rm Clip}(g_{t,i},C_t)+\mathcal{N}(0,\sigma_T^2C_t^2\mathbb{I}^d)$ to represent the estimated gradient after clipping and noise adding. Since we use a dynamic $C$, we use $C_t$ to represent $C$ at iteration $t$. Besides, we focus on the influence of the adaptive adjustment of $C$ on the convergence of the model, we do not consider the bias caused by histogram estimation and ignore the privacy cost of the histogram; simply set $\sigma_T=\sigma$. Therefore, $\tilde{g_t}=\frac{1}{B}\sum_{i\in\mathcal{B}_t}{\rm Clip}(g_{t,i},C_t)+\mathcal{N}(0,\sigma^2C_t^2\mathbb{I}^d)$

The update of \dcsgdp of iteration $t$ can be written as follows:

\begin{equation}
    \theta_{t+1}=\theta_t-\eta_t\tilde{g_t}
\end{equation}

By the assumption $\mathcal{L}(\theta)$ is $L$-smooth, and take the expectation on randomness of step $t$ includes noise sampling and data sampling, we have:

\begin{align}\label{equasmooth}
&\mathbb{E}_t[\mathcal{L}(\theta_{t+1})]\\ \nonumber
&\leq \mathcal{L}(\theta_t)+\mathbb{E}_t[\langle g_t,(\theta_{t+1}-\theta_t)\rangle]+\mathbb{E}_t[\frac{L}{2}||\theta_{t+1}-\theta_t||^2]\\ \nonumber
&=\mathcal{L}(\theta_t)-\eta \langle g_t,\mathbb{E}_t[\tilde{g_t}]\rangle+\frac{L\eta^2}{2}\mathbb{E}_t[||\tilde{g_t}||^2]\\ \nonumber
&=\mathcal{L}(\theta_t)-\eta \langle g_t,\mathbb{E}_t[\bar{g_t}+\frac{1}{B}\mathcal{N}(0,\sigma^2C_t^2\mathbb{I})]\rangle+ \\ \nonumber
& \frac{L\eta^2}{2}\mathbb{E}_t[||\bar{g_t}+\frac{1}{B}\mathcal{N}(0,\sigma^2C_t^2\mathbb{I})||^2]\\ \nonumber
&\leq\mathcal{L}(\theta_t)-\eta \langle g_t,\mathbb{E}_t[\bar{g_t}]\rangle+L\eta^2\mathbb{E}_t[||\bar{g_t}||^2]+\frac{L\eta^2\sigma^2d}{B^2}C_t^2
\end{align}

In \Cref{equasmooth}, the last inequality follows from Cauchy Schwartz. And $d$ is the dimension of the gradient. Then sum up the above equation from $t=1$ to $t=T$ and take the expectation over the randomness across all the iterations, we get:
\begin{small}
\begin{equation}
\label{equasum}
\begin{split}
&\mathbb{E}[\mathcal{L}(\theta_{T+1})]-\mathbb{E}[\mathcal{L}(\theta_{1})]\\
&\leq-\eta\mathbb{E}[\sum_{t=1}^T\langle g_t,\mathbb{E}_t[\bar{g_t}]\rangle]+L\eta^2\sum_{t=1}^T\mathbb{E}[||\bar{g_t}||^2]+\frac{L\eta^2\sigma^2d}{B^2}\sum_{t=1}^T C_t^2
\end{split}
\end{equation}
\end{small}
Now we pay attention to $\mathbb{E}[\sum_{t=1}^T<g_t, \mathbb{E}_t[\bar{g_t}]>]$, rearranging \Cref{equasum} and divide it by $T\eta$. We get:

\begin{align}\label{equa14}
&\frac{1}{T}\mathbb{E}[\sum_{t=1}^T\langle g_t,\mathbb{E}_t[\bar{g_t}]\rangle]\\ \nonumber
&\leq\frac{1}{T\eta}(\mathbb{E}[\mathcal{L}(\theta_{1})]-\mathbb{E}[\mathcal{L}(\theta_{T+1})])+\frac{L\eta}{T}\sum_{t=1}^T\mathbb{E}[||\bar{g_t}||^2] \\ \nonumber
& +\frac{L\eta\sigma^2d}{B^2T}\sum_{i=1}^TC_t^2
\end{align}

We first handle $\mathbb{E}_t[\bar{g_t}]$:

\begin{equation}
\label{egt}
\begin{split}
&\mathbb{E}_t[\bar{g_t}]\\
&=\mathbb{E}_{\mathcal{B}_t}[\frac{1}{B}\sum_{i\in\mathcal{B}_t} {\rm Clip}(g_{t,i},C_t)]\\
&=\mathbb{E}_x[{\rm Clip}(g_{t,i},C_t)]\\
&=\frac{1}{N}\sum_{i=1}^N{\rm Clip}(g_{t,i},C_t)\\
&=\sum_{i=1}^N\frac{1}{N}{\rm Clip}(g_{t,i},C_t)\\
&=\sum_{i=1}^N\frac{1}{N}\mathbb{I}[||g_{t,i}||\leq C_t]g_{t,i}+\sum_{i=1}^N\frac{1}{N}\mathbb{I}[||g_{t,i}||>C_t]g_{t,i}\frac{C_t}{||g_{t,i}||}\\
&=\sum_{i=1}^N\frac{1}{N}\mathbb{I}[||g_{t,i}||\leq C_t]g_{t,i}+\sum_{i=1}^N\frac{1}{N}\mathbb{I}[||g_{t,i}||>C_t]g_{t,i}\\
&+\sum_{i=1}^N\frac{1}{N}\mathbb{I}[||g_{t,i}||>C_t]g_{t,i}(\frac{C_t}{||g_{t,i}||}-1)\\
&=\sum_{i=1}^N\frac{1}{N}g_{t,i}+\sum_{i=1}^N\frac{1}{N}\mathbb{I}[||g_{t,i}||>C_t]g_{t,i}(\frac{C_t}{||g_{t,i}||}-1)\\
&=g_t+\sum_{i=1}^N\frac{1}{N}\mathbb{I}[||g_{t,i}||>C_t]g_{t,i}(\frac{C_t}{||g_{t,i}||}-1)
\end{split}
\end{equation}

In \Cref{egt}, $\mathbb{I}()$ will output 1 when the input is true. By \assref{boundofg}, we have:

\begin{equation}
\label{eg16}
\begin{split}
&||\mathbb{E}[\bar{g_t}]-g_t||\\
&=||\sum_{i=1}^N\frac{1}{N}\mathbb{I}[||g_{t,i}||>C_t]g_{t,i}(\frac{C_t}{||g_{t,i}||}-1)||\\
&\leq \sum_{i=1}^N\frac{1}{N}\mathbb{I}[||g_{t,i}||>C_t](G-C_t)\\
&\leq \sum_{i=1}^N\frac{1}{N}\mathbb{I}[||g_{t,i}||>C_t]G\\
&=P(C_t)G
\end{split}
\end{equation}

The first inequality is due to Minkowski Inequality and \Cref{boundofg}. In \Cref{eg16}, we use $P(C_t)$ to represent the probability that a single gradient is being clipped at iteration $t$. In \dcsgdp, percentile $p$ reflects how much gradient will not be clipped, assuming our histogram publication is accurate enough, thereby $P(C_t)=1-p=p'$. Then we can get:

\begin{equation}
\label{finalequa}
\begin{split}
&\frac{1}{T}\mathbb{E}[\sum_{t=1}^T\langle g_t,\mathbb{E}_t[\bar{g_t}]\rangle]\\
&=\frac{1}{T}\mathbb{E}[\sum_{t=1}^T\langle g_t,(g_t-g_t+\mathbb{E}_t[\bar{g_t}])\rangle]\\
&\geq \frac{1}{T}\mathbb{E}[\sum_{t=1}^T||g_t||^2]-\frac{1}{T}\mathbb{E}[\sum_{t=1}^T||g_t||*||g_t-\mathbb{E}_t[\bar{g_t}]||]\\
&\geq \frac{1}{T}\mathbb{E}[\sum_{t=1}^T||g_t||^2]-\frac{1}{T}\mathbb{E}[\sum_{t=1}^T||g_t||p'G]
\end{split}
\end{equation}

The first inequality follows from Cauchy Schwartz. The second inequality is due to \Cref{eg16} Note that $g_t=\frac{1}{N}\sum_{i=0}^{N}g_{t,i}=\nabla\mathcal{L}(\theta_t)$, Combining \Cref{finalequa} and \Cref{equa14}, we get:

\begin{equation}
\label{final2}
\begin{split}
&\frac{1}{T}\mathbb{E}[\sum_{t=1}^T||\nabla\mathcal{L}(\theta_t)||^2] \\
&\leq\frac{1}{T\eta}(\mathbb{E}[\mathcal{L}(\theta_{1})]-\mathbb{E}[\mathcal{L}(\theta_{T+1})])+\frac{L\eta}{T}\sum_{t=1}^T\mathbb{E}[||\bar{g_t}||^2]\\
&+\frac{L\eta\sigma^2d}{B^2T}\sum_{i=1}^TC_t^2+\frac{1}{T}\mathbb{E}[\sum_{t=1}^T||g_t||p'G]
\end{split}
\end{equation}

\Cref{final2} is very similar to the result of \cite{du2021dynamic}. And we use the norm distribution and $p<1$ to decide $C$, we can get $C_t\leq G$ by \assref{boundofg}. Therefore, we get:

\begin{equation}
\label{final4}
\begin{split}
&\frac{1}{T}\mathbb{E}[\sum_{t=1}^T||\nabla\mathcal{L}(\theta_t)||^2] \\
&\leq\frac{1}{T\eta}(\mathbb{E}[\mathcal{L}(\theta_{1})]-\mathbb{E}[\mathcal{L}(\theta_{T+1})])+\frac{L\eta}{T}\sum_{t=1}^T\mathbb{E}[||\bar{g_t}||^2]\\
&+\frac{L\eta\sigma^2G^2d}{B^2}+\frac{p'}{T}\mathbb{E}[\sum_{t=1}^T||g_t||G]\\
\end{split}
\end{equation}

Note that the $\bar{g}$ is the gradient after clipped, we have $C_t\leq G$, and we can get $||g_t||\leq G$ with Assumption 2 and Cauchy Schwartz inequality. Therefore, we have $||\bar{g}_t||<G$. With \Cref{abadiMA}, let $\sigma= c_2\frac{q\sqrt{T\ln(1/\delta)}}{\epsilon}$, where $q=B/N$. With \assref{boundofl} (used to get the upper bound of $-\mathcal{L}(\theta_{t+1})$, remove the dependency of $\mathcal{L}(\theta_{T+1})$ on $T$). we have:

\begin{equation}
\label{final3}
\begin{split}
&\frac{1}{T}\mathbb{E}[\sum_{t=1}^T||\nabla\mathcal{L}(\theta_t)||^2] \\
&\leq \frac{1}{T\eta}(\mathbb{E}[\mathcal{L}(\theta_{1})]-\mathbb{E}[\mathcal{L}(\theta_{T+1})]) \\
&+L\eta G^2+\frac{LT\eta d\ln(1/\delta)v^2 G^2}{N^2\epsilon^2}+p'G^2 \\
&\leq \mathcal{O}(\frac{1}{T\eta})+\mathcal{O}(\eta)+\mathcal{O}(\frac{T\eta d \ln(1/\delta)}{N^2\epsilon^2})+\underbrace{p'G^2}_{bias} 
\end{split}
\end{equation}

The last non-vanishing term is due to bias. By setting $T\eta=\frac{N\epsilon}{\sqrt{d\ln(1/\delta)}}$ and $\eta=\frac{\sqrt{d\ln(1/\delta)}}{N\epsilon}$, we can further get:
\begin{equation}
\label{finalfinal}
\begin{split}
\frac{1}{T}\mathbb{E}[\sum_{t=1}^T||\nabla\mathcal{L}(\theta_t)||^2]\leq\mathcal{O}(\frac{\sqrt{d\ln(1/\delta)}}{N\epsilon})+{\rm bias}
\end{split}
\end{equation}

which finishes the proof.

\section{Convergence Analysis of \dcsgde}
\label{proofDCSE}
We follow the proof in \appref{proofP}, but we do not use the upper bound $G$ of gradient in \Cref{eg16}, and we use the actual value. Then modifying \Cref{final2}, and with \assref{boundofg}, we have:

\begin{equation}
\label{appendixDCE}
\begin{split}
&\frac{1}{T}\mathbb{E}[\sum_{t=1}^T||\nabla\mathcal{L}(\theta_t)||^2] \\
&\leq\frac{1}{T\eta}(\mathbb{E}[\mathcal{L}(\theta_{1})]-\mathbb{E}[\mathcal{L}(\theta_{T+1})])+\frac{L\eta}{T}\sum_{t=1}^T\mathbb{E}[||\bar{g_t}||^2]\\
&+\frac{L\eta\sigma^2d}{TB^2}\sum_{t=1}^TC_t^2+\frac{1}{T}\mathbb{E}[\sum_{t=1}^T||g_t||]*(\frac{1}{N}\sum_{i=1}^N \max(||g_{t,i}||-C_t,0))\\
&\leq \frac{1}{T\eta}(\mathbb{E}[\mathcal{L}(\theta_{1})]-\mathbb{E}[\mathcal{L}(\theta_{T+1})])+\frac{L\eta}{T}\sum_{t=1}^T\mathbb{E}[||\bar{g_t}||^2]\\
&+ \frac{1}{T}\sum_{t=1}^T(L\eta\underbrace{\frac{\sigma^2C_t^2d}{B^2}}_{Variance}+\underbrace{\frac{1}{N}\sum_{i=1}^N G\max(||g_{t,i}||-C_t,0)}_{Bias})
\end{split}
\end{equation}

The variance term and bias term are similar to our expected squared error in \dcsgde in the formulation. Therefore we can think \dcsgde balances the variance and bias in some sense, which may help the convergence.

Note that we choose $C_t$ as the threshold when $C_t$ is larger than all gradients, the bias term will be 0. Since the variance increases when $C_t$ increases and we will output $C_t$ with the smallest expected squared error, therefore $C_t$ is also bounded by the bound of the gradient. Therefore we can get the following:

\begin{equation}
\label{final1DCE}
\begin{split}
&\frac{1}{T}\mathbb{E}[\sum_{t=1}^T||\nabla\mathcal{L}(\theta_t)||^2] \\
&\leq \frac{1}{T\eta}(\mathbb{E}[\mathcal{L}(\theta_{1})]-\mathbb{E}[\mathcal{L}(\theta_{T+1})])+\frac{L\eta}{T}\sum_{t=1}^T\mathbb{E}[||\bar{g_t}||^2]\\
&+ \frac{1}{T}\sum_{t=1}^T(L\eta\frac{\sigma^2G^2d}{B^2}+G^2)
\end{split}
\end{equation}

Similar to \appref{proofP}, with the assumptions and set $\eta=\frac{1}{\sqrt{T}}$, we have:

\begin{equation}
\label{finalfinalDCSE}
\begin{split}
\frac{1}{T}\mathbb{E}[\sum_{t=1}^T||\nabla\mathcal{L}(\theta_t)||^2]\leq\mathcal{O}(\frac{\sqrt{d\ln(1/\delta)}}{N\epsilon})+{\rm bias}
\end{split}
\end{equation}

which finishes the proof.

\section{Datasets}
\noindent \textbf{Datasets.} We use four benchmark datasets to measure the privacy-utility trade-off: CIFAR10~\cite{krizhevsky2009learning}, MNIST~ \cite{lecun-mnisthandwrittendigit-2010}, SVHN~\cite{netzer2011reading}, and QNLI~\cite{wang2018glue}. The first three are for computer vision tasks, and the last one is for natural language inference tasks.

% \begin{itemize}[leftmargin=*]
\begin{itemize}
\item \textbf{MNIST} \cite{lecun-mnisthandwrittendigit-2010} is a collection of handwritten digit images with 10 categories. It has a training set of 60,000 examples and a test set of 10,000 examples. Each example has a 28x28 grayscale image and a label from 0-9 digits.

\item \textbf{CIFAR10} \cite{krizhevsky2009learning} is acollection of colored objects images with 10 categories. It has a training set of 50,000 examples and a test set of 10,000 examples. Each example has a 32x32 colored image.

\item \textbf{SVHN} \cite{netzer2011reading} is a collection of printed digits cropped from pictures of house number plates with 10 categories. It has a training set of 73,257 examples and a test set of 26,032 examples. Each example has a 32x32 colored image.

\item \textbf{QNLI} \cite{wang2018glue} is a large collection of English ``problem"-``sentence" pairs manually labeled as entailment and not entailment for Natural Language Inference (NLI), which is to determine the relationship between two short texts. It has a training set of 104,743 examples and a test set of 5,463 examples.
\end{itemize}

\section{Hyperparameter Setting}

\begin{table}[htp]
	\centering
        \small
	\caption{Hyperparameter grids of different methods.}
        \label{parametergrid}
\begin{tabular}{c|c}
\toprule
Method        & Hyperparameter                       \\
\midrule
\dpsgd         & $C$, $\eta$ (SGD optimizer)           \\
\dcsgdp        & $p$, $\eta$ (SGD optimizer)           \\
\dcsgde        & $\eta$ (SGD optimizer)                \\
Andrew et al.\cite{andrew2021differentially} & $p$, $\eta_C$, $\eta$ (SGD optimizer) \\
Du et al. \cite{du2021dynamic}    & $C$, $\rho_C$,$\eta$ (SGD optimizer) \\
\bottomrule
\end{tabular}
\end{table}

\begin{table}[htp]
	\centering
        \small
	\caption{Candidates of hyperparameter to be tuned.}
        \label{parametercandidates}
\begin{tabular}{c|c}
\toprule
Hyperparameter & Candidates                          \\
\midrule
$C$            & \{0.1 0.2 0.5 0.8 1 2 4 6 8 10\}        \\
$\eta$         & \{$(2^i)\times10^{-5},i\in [0,19]$\}, totally 20         \\
$\eta_C$       & \{0.1 0.15 0.2 0.25 0.3\}              \\
$p$            & \{0.1 0.2 0.3 0.4 0.5 0.6 0.7 0.8 0.9\} \\
$\rho_C$        & \{0.1 0.2 0.3 0.4 0.5 0.6 0.7 0.8 0.9\} \\
\bottomrule
\end{tabular}
\end{table}

\section{More Results of Section VI-C}
\label{appRQ3}
This section presents more results on the influence of parameters.
Figure~\ref{fig:accp} shows the accuracy of different $p$ for \dcsgdp on CIFAR10 (ResNet18) and MNIST (CNN).
Figure~\ref{fig:accb} presents the accuracy of different $b$ for \dcsgdp (P) and \dcsgde (E) on CIFAR10 (ResNet18) and MNIST (CNN).
Figure~\ref{acce} shows the accuracy of different $\sigma_H$ for \dcsgdp (P) and \dcsgde (E) on CIFAR10 (ResNet18) and MNIST (CNN).

\begin{figure}[!h]
    \centering
    \subfloat{
        \includegraphics[width=1.0\linewidth]{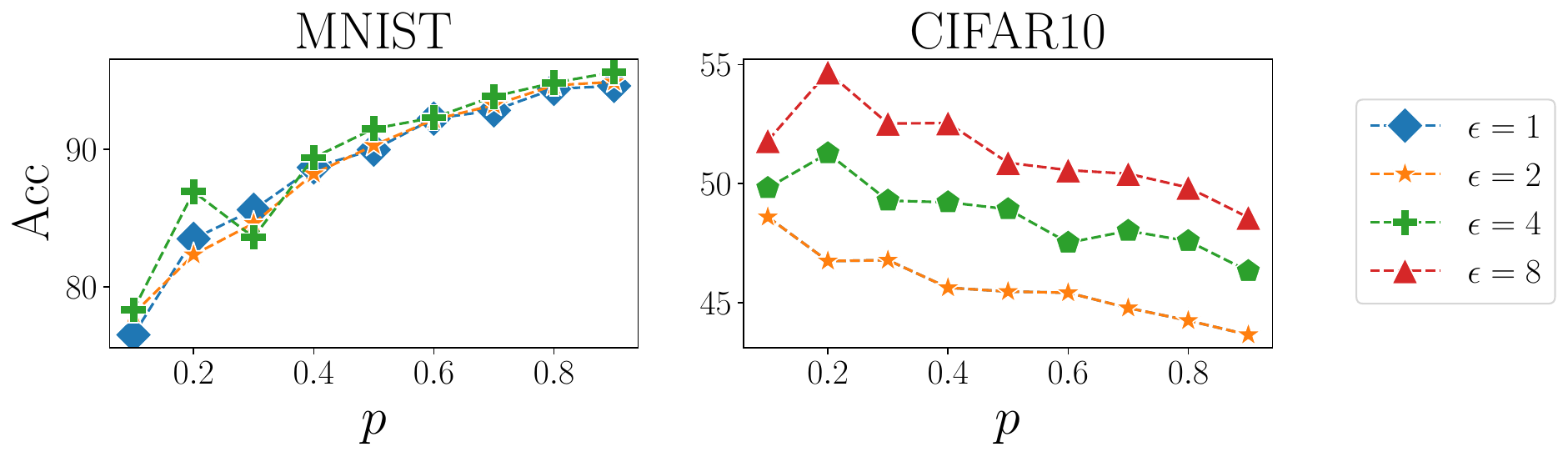}
    }
    \caption{The accuracy of different $p$ for \dcsgdp on CIFAR10 (ResNet18) and MNIST (CNN).}
    \label{fig:accp}
\end{figure}

\begin{figure}[!h]
    \centering
    \subfloat{
        \includegraphics[width=1.0\linewidth]{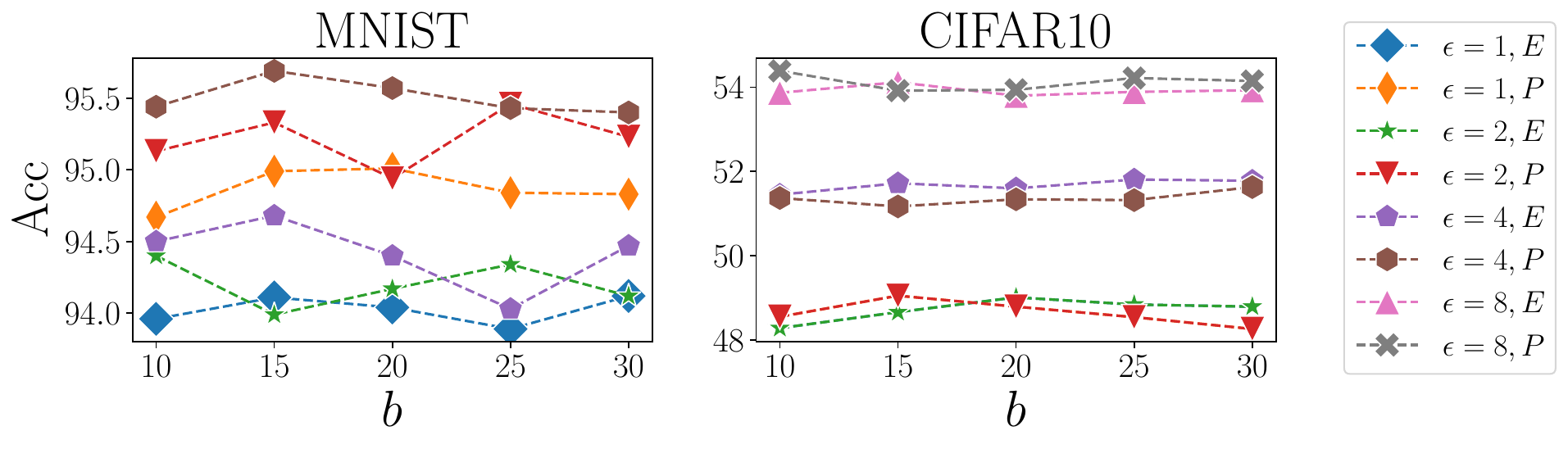}
    }
    \caption{The accuracy of different $b$ for \dcsgdp (P) and \dcsgde (E) on CIFAR10 (ResNet18) and MNIST (CNN).}
    \label{fig:accb}
\end{figure}

\begin{figure}[!h]
    \centering

    \subfloat{
        \includegraphics[width=1.0\linewidth]{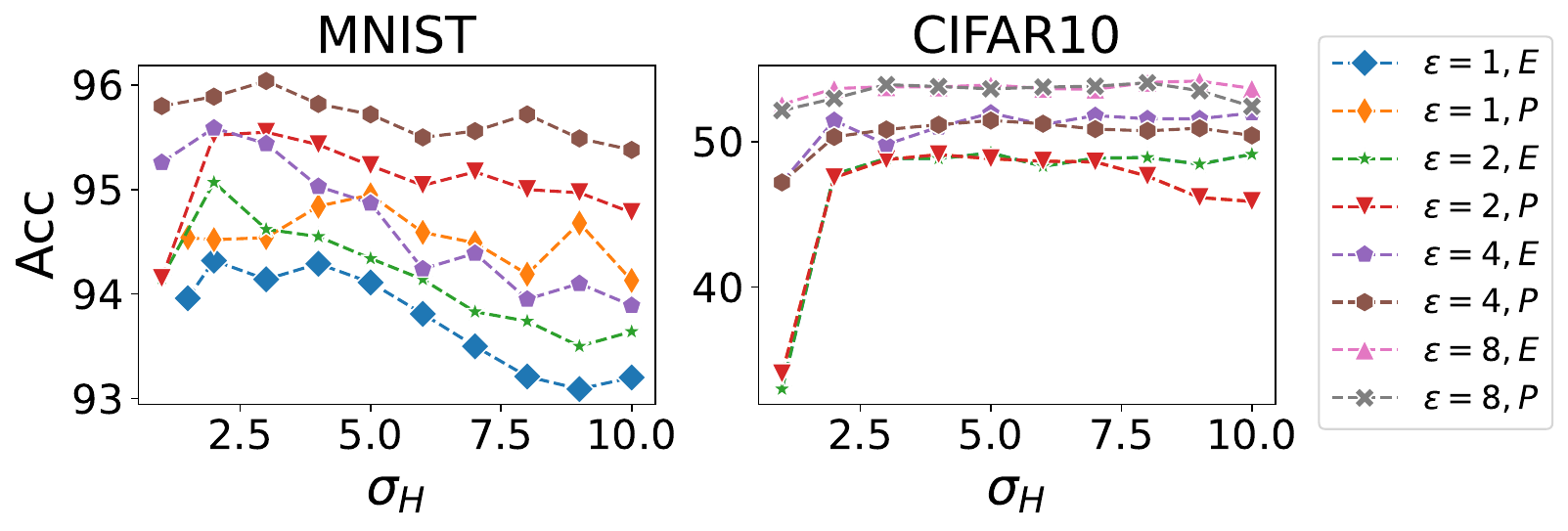}
    }
    \caption{The accuracy of different $\sigma_H$ for \dcsgdp (P) and \dcsgde (E) on CIFAR10 (ResNet18) and MNIST (CNN).}
    \label{acce}
\end{figure}

% \end{appendices}

\section{Comparison with AutoClip}
In this section, we give the comparison result with AutoClip\cite{bu2022automatic}. We conduct the experiments with the same setting in main text on Adam. Because AutoClip provides a new clipping operation, we want to test its stability under different settings. Thus, we add a new model-dataset pair LSTM-NAMES (The model and dataset are like \url{https://pytorch.org/tutorials/intermediate/char_rnn_classification_tutorial.html}. NAMES is a dataset consisting of names from 18 countries; the task is identifying the country from which the name originates. And we use a model with two LSTM layers.). We also set epochs to 20 and use the default parameter setting of the Adam optimizer. Since AutoClip does not need to tune $C$ or other parameters, we only provide the result without the privacy cost of hyperparameter tuning.

The result is as Table \ref{AutoClipResult}, our methods significantly perform better on SVHN and NAMES and have a close performance under other settings. For autoclip, it performs excellent on QNLI and MNIST while imperfect on NAMES, and the difference is very significant; this may be due to its particular clipping operation. In summary, AutoClip gives a new clipping operation, which is not widely accepted. Though our work is orthogonal to AutoClip, our work can still achieve a better or close performance compared to AutoClip.

\begin{table*}[t]
	\centering
        \footnotesize
\setlength{\tabcolsep}{4mm}{
	% \centering
 %        \small
	\caption{The accuracy comparison of our methods and AutoClip for Adam optimizer without the privacy cost of hyperparameter tuning.}
        \label{AutoClipResult}
	\begin{tabular}{c|c|c|cccc}
        \toprule
		Models&Dataset&$\epsilon$&\dpsgd&\dcsgdp&\dcsgde&AutoClip \\
            \midrule
            \multirow{3}{*}{CNN} & \multirow{3}{*}{MNIST}&$1$&$94.78\pm0.21$&$94.72\pm0.19$&$94.04\pm0.30$&$\mathbf{96.54\pm0.10}$\\
            &&$2$&$95.62\pm0.15$&$95.08\pm0.27$&$94.19\pm0.87$&$\mathbf{96.83\pm0.11}$\\
            &&$4$&$95.95\pm0.11$&$95.60\pm0.20$&$94.65\pm0.16$&$\mathbf{97.21\pm0.16}$\\
            \midrule
            \multirow{3}{*}{BERT-base} & \multirow{3}{*}{QNLI}&$2$&$73.21\pm0.48$&$73.21\pm0.36$&$74.31\pm0.38$&$\mathbf{74.92\pm0.57}$\\
            &&$4$&$75.36\pm0.22$&$74.35\pm0.25$&$\mathbf{75.61\pm0.21}$&$75.52\pm0.33$\\
            &&$8$&$76.20\pm0.53$&$75.65\pm0.15$&$76.29\pm0.39$&$\mathbf{77.00\pm0.30}$\\
            
            \midrule
            \multirow{6}{*}{ResNet18} & \multirow{3}{*}{CIFAR10}&$2$&$48.34\pm0.45$&$48.63\pm0.42$&$\mathbf{48.89\pm0.48}$&$48.10\pm0.22$\\
            &&$4$&$51.28\pm0.16$&$\mathbf{51.46\pm0.15}$&$51.42\pm0.38$&$51.44\pm0.26$\\
            &&$8$&$53.81\pm0.46$&$\mathbf{54.18\pm0.32}$&$53.76\pm0.27$&$53.67\pm0.32$\\
            
            \cline{2-7}
            
            &\multirow{3}{*}{SVHN}&$2$&$80.05\pm0.21$&$80.42\pm0.28$&$\mathbf{81.39 \pm 0.46}$&$80.18\pm0.21$\\
            &&$4$&$81.03\pm0.46$&$81.43\pm0.22$&$\mathbf{82.45\pm0.36}$&$81.52\pm0.24$\\
            &&$8$&$81.92\pm0.46$&$83.06\pm0.13$&$\mathbf{83.44\pm0.40}$&$82.19\pm0.23$\\

            \midrule
            \multirow{6}{*}{ResNet34}
            &\multirow{3}{*}{CIFAR10}&$2$&$43.72\pm0.40$&$43.66\pm0.38$&$\mathbf{44.07\pm0.59}$&$43.29\pm0.28$\\
            &&$4$&$46.89 \pm 0.59$&$47.34\pm0.26$&$47.30\pm0.44$&$\mathbf{47.53\pm0.46}$\\
            &&$8$&$49.79\pm 0.36$&$49.28\pm0.44$&$50.10\pm0.28$&$\mathbf{50.26\pm0.29}$\\
            
            \cline{2-7}
            
            &\multirow{3}{*}{SVHN}&$2$&$78.47\pm0.33$&$78.90\pm0.49$&$\mathbf{79.26 \pm 0.28}$&$78.13\pm0.41$\\
            &&$4$&$79.70\pm0.47$&$80.08\pm0.18$&$\mathbf{81.67\pm0.34}$&$79.58\pm0.65$\\
            &&$8$&$80.66\pm0.30$&$81.81\pm0.39$&$\mathbf{82.47\pm0.46}$&$80.74\pm0.35$\\
            \midrule
            \multirow{3}{*}{LSTM} & \multirow{3}{*}{NAMES}&$2$&$55.14\pm0.25$&$\mathbf{55.55\pm0.38}$&$55.26\pm0.30$&$54.48\pm0.23$\\
            &&$4$&$57.24\pm1.32$&$\mathbf{59.56\pm0.73}$&$58.56\pm0.88$&$55.22\pm0.18$\\
            &&$8$&$59.91\pm0.22$&$\mathbf{60.96\pm0.41}$&$60.71\pm0.53$&$56.34\pm0.94$\\
        
        \bottomrule
	\end{tabular}
	}
 \vspace{-0.3cm}
\end{table*}

\end{document}